\documentclass{article} 

\title{Learning in the Presence of Corruption}
\date{}

\author{
Brendan van Rooyen$^{*, \dagger}$ \\
\and
Robert C. Williamson$^{*, \dagger}$ \\
\and
\vspace{-0.15in} \\
$^*$The Australian National University \qquad $^\dagger$National ICT Australia \\
{\footnotesize \texttt{\{ brendan.vanrooyen, bob.williamson \}@nicta.com.au} } \\
}

\usepackage{times}
\usepackage{hyperref}
\usepackage{url}

\usepackage{bbold,bm}
\usepackage{amsthm, amsmath, amssymb}
\usepackage[lined,boxed,commentsnumbered]{algorithm2e}
\usepackage{caption}
\usepackage{graphicx}
\usepackage{float}
\usepackage[all]{xy}
\usepackage{booktabs}

\usepackage[left=1.25in, top=1in, bottom=1in, right=1.25in]{geometry}
\setlength{\parskip}{2pt} 
\setlength{\parindent}{0pt}

\newcommand{\RR}{\protect\mathbb{R}}
 
\newcommand{\EE}{\protect\mathbb{E}}
\newcommand{\PP}{\protect\mathbb{P}}
\newcommand{\alg}{\protect\mathcal{A}}

\newcommand{\obs}{\protect\mathcal{O}}
\newcommand{\ind}{\protect\bm{1}}
\newcommand{\Fclass}{\protect\mathcal{F}}

\newcommand{\Df}{\protect{D_{f}}}
\newcommand{\dist}{\raise.17ex\hbox{$\scriptstyle\mathtt{\sim}$}}

\newcommand{\risk}{\protect{\mathcal{R}}}
\newcommand{\Rbar}[1]{\protect{\underline{\risk}_{#1}}}

\newcommand{\relloss}{\protect{\Delta L}}

\newcommand{\ip}[2]{\langle #1, #2\rangle}

\newcommand{\riskL}[1]{\risk_{#1}}
\newcommand{\pred}[1]{[\![ #1 ]\!]}

\newcommand{\logdelta}{\log\left(\frac{1}{\delta}\right)}

\DeclareMathOperator*{\argmin}{arg\,min}

\newtheorem{theorem}{Theorem}
\newtheorem{lemma}{Lemma}
\newtheorem{definition}{Definition}

\newcommand{\TnoisyLabels}{\left(\begin{array}{cc}
1-\sigma_{-1} & \sigma_1 \\ 
\sigma_{-1} & 1-\sigma_1
\end{array} \right)}

\newcommand{\MnoisyLabels}{\frac{1}{1 - \sigma_{-1} - \sigma_1} \left( \begin{array}{cc}
1-\sigma_1 & -\sigma_{-1} \\ 
-\sigma_{1} & 1-\sigma_{-1}
\end{array} \right) }

\newcommand{\TnoisyLabelsmulticlass}{\left(\begin{array}{ccc}
1-\sigma & \frac{\sigma}{2} & \frac{\sigma}{2} \\ 
\frac{\sigma}{2} & 1-\sigma & \frac{\sigma}{2} \\ 
\frac{\sigma}{2} & \frac{\sigma}{2} & 1-\sigma
\end{array} \right)}

\newcommand{\MnoisyLabelsmulticlass}{\left(\begin{array}{ccc}
\frac{2-\sigma}{2 - 3\sigma} & \frac{-\sigma}{2-3\sigma} & \frac{-\sigma}{2-3\sigma} \\ 
\frac{-\sigma}{2-3\sigma} & \frac{2-\sigma}{2 - 3\sigma} & \frac{-\sigma}{2-3\sigma} \\ 
\frac{-\sigma}{2-3\sigma} & \frac{-\sigma}{2-3\sigma} & \frac{2-\sigma}{2 - 3\sigma}
\end{array} \right)}

\newcommand{\Tsemisymmetric}{\left(\begin{array}{cc}
\sigma & 0 \\ 
 0 & \sigma \\
1-\sigma & 1-\sigma
\end{array} \right)}

\newcommand{\Msemisymmetric}{\left(\begin{array}{cc}
\frac{1 - 2 \sigma + 2 \sigma^2}{1- 3\sigma + 5 \sigma^2 - 3 \sigma^3} & \frac{-\sigma^2}{1- 3\sigma + 5 \sigma^2 - 3 \sigma^3} \\ 
\frac{-\sigma^2}{1- 3\sigma + 5 \sigma^2 - 3 \sigma^3} & \frac{1 - 2 \sigma + 2 \sigma^2}{1- 3\sigma + 5 \sigma^2 - 3 \sigma^3} \\
\frac{\sigma }{1 - 2 \sigma + 3 \sigma^2} & \frac{\sigma }{1 - 2 \sigma + 3 \sigma^2}
\end{array} \right)}

\newcommand{\Tsemi}{\left(\begin{array}{cc}
\sigma_{-1} & 0 \\ 
 0 & \sigma_{1} \\
1 - \sigma_{-1} & 1 - \sigma_{1}
\end{array} \right)}

\begin{document}

\maketitle

\begin{abstract}
In supervised learning one wishes to identify a pattern present in a joint distribution $P$, of instances, label pairs, by providing a function $f$ from instances to labels that has low risk $\EE_{P}\ell(y,f(x))$. To do so, the learner is given access to $n$ iid samples drawn from $P$. In many real world problems clean samples are not available. Rather, the learner is given access to samples from a corrupted distribution $\tilde{P}$ from which to learn, while the goal of predicting the clean pattern remains. There are many different types of corruption one can consider, and as of yet there is no general means to compare the relative ease of learning under these different corruption processes. In this paper we develop a general framework for tackling such problems as well as introducing upper and lower bounds on the risk for learning in the presence of corruption. Our ultimate goal is to be able to make informed economic decisions in regards to the acquisition of data sets. For a certain subclass of corruption processes (those that are \emph{reconstructible}) we achieve this goal in a particular sense. Our lower bounds are in terms of the coefficient of ergodicity \cite{Dobrushin1956}, a simple to calculate property of stochastic matrices. Our upper bounds proceed via a generalization of the method of unbiased estimators appearing in \cite{Natarajan2013} and implicit in the earlier work \cite{Kearns1998}.
\end{abstract}


\section{Introduction}

The goal of supervised learning is to find a function in some hypothesis class that predicts a relationship between instances and labels. Such a function should have low average loss according to the true distribution of instances and labels, $P$. The learner is not given direct access to $P$, but rather a training set comprising $n$ iid samples from $P$. There are many algorithms for solving this problem (for example empirical risk minimization) and this problem is well understood.
\\
\\
There are many other \emph{types} of data one could learn from. For example in semi-supervised learning \cite{Chapelle2010} the learner is given $n$ instance label pairs and $m$ instances devoid of labels. In learning with noisy labels \cite{Angluin1988,Kearns1998,Natarajan2013}, the learner observes instance label pairs where the observed labels have been corrupted by some noise process. There are many other variants including, but not limited to, learning with label proportions \cite{Quadrianto2009}, learning with partial labels \cite{Cour2011}, multiple instance learning \cite{Maron1998} as well as combinations of the above. 
\\
\\
What is currently lacking is a general theory of learning from corrupted data, as well as means to \emph{compare} the relative usefulness of different data types. Such a theory is required if one wishes to make informed economic decisions on which data sets to acquire. For example, are $n$ clean datum better or worse than $n_1$ noisy labels and $n_2$ partial labels?
\\
\\
To answer this question we first place the problem of corrupted learning into the abstract language of statistical decision theory. We then develop general lower and upper bounds on the risk relative to the amount of corruption of the clean data. Finally we show examples of problems that fit into this abstract framework.
\\
\\
The main contributions of this paper are: 

\begin{itemize}
	\item Novel, general means to construct methods for learning from corrupted data based on a generalization of the method of unbiased estimators presented in \cite{Natarajan2013} and implicit in the earlier work \cite{Kearns1998} (theorems \ref{MUBE} and \ref{Corrupted PAC Bayes})
	\item Novel lower bounds on the risk of corrupted learning (theorem \ref{Relative Lower Bound}).
	\item Means to understand \emph{compositions} of corruptions (lemmas \ref{Alpha Composition} and \ref{Loss norm Composition}).
	\item Upper and lower bounds on the risk of learning from combinations of corrupted data (theorems \ref{Collection of Corrupted PAC Bayes} and \ref{Collection of Corrupted Lower Bound}).
	\item Analyses of the tightness of the above bounds.
\end{itemize}
In doing so we provide answers to our central question of how to rank different types of corrupted data, through the utilization of our upper or lower bounds. While not the complete story for \emph{all} problems, the contributions outlined above make progress toward the final goal of being able to make informed economic decisions regarding the acquisition of data sets. All proofs omitted in the main text appear in the appendix.


\section{The Decision Theoretic Framework}

Decision theory deals with the general problem of decision making under uncertainty. One starts with a set $\Theta$ of possible true hypotheses (only one of which is actually true) as well as set $A$ of actions available to the decision maker. Prior to acting, the decision maker performs an experiment, the outcome of which is assumed to be related to the true hypothesis, and observes $z$ in an observation space $\obs$. Ultimately the decision maker makes act $a$ and incurs loss $L(\theta,a)$, with $\theta$ the unknown true hypothesis. We model the relationships between unknowns and the results of experiments with \emph{Markov kernels} \cite{Torgersen1991,Cam2011,Morse1966,Chentsov1982}. The abstract development that follows is necessary in order to place a wide range of corruption processes into a single framework so that they may be compared. 
\subsection{Markov Kernels}\label{sec:markov-kernels}
As much of our focus will be on noise on the \emph{labels} and not on the \emph{instances}, henceforth we will assume we are only working with finite sets.
\\
Denote by $\PP(X)$ the set of probability distributions on a set $X$. Define a \emph{Markov kernel} from a set $X$ to a set $Y$ (denoted by $X \rightsquigarrow Y$) to be a function $T : X \rightarrow \PP(Y)$. Denote the set of all Markov kernels from $X$ to $Y$ by $M(X,Y)$. Every function $f : X \rightarrow Y$ defines a Markov kernel $T : X \rightsquigarrow Y$ with $T(X) = \delta_{f(x)}$, a point mass on $f(x)$. Given two Markov kernels $T_1 : X \rightsquigarrow Y $ and $T_2 : Y \rightsquigarrow Z$ we can \emph{compose} them to form $T_2  T_1 : X \rightsquigarrow Z$  by taking 
$$
\EE_{T_2 T_1(x)} f =  \EE_{ y \dist T_1(x)} \EE_{z \dist T_2(y)} f(z)
$$ 
for all $f: Z \rightarrow \RR$. One can also combine Markov kernels in parallel. If $P \in \PP(X)$ and $Q \in \PP(X)$, denote the product distribution by $P \otimes Q$. If $T_i : X \rightsquigarrow Y$, $ i \in [1;n]$, are Markov kernels then $\otimes_{i=1}^n T_i : X^n \rightsquigarrow Y^n$ with $\otimes_{i=1}^n T_i(x^n) = T_1(x_1) \otimes \dots \otimes T_n(x_n)$. By restricting ourselves to finite sets, distributions can be represented by vectors, Markov kernels by column stochastic matrices (positive matrices with column sum 1) and composition by matrix multiplication. An \emph{experiment} on $\Theta$ is any Markov kernel with domain $\Theta$ and a \emph{learning algorithm} $\alg$ is any Markov kernel with co-domain $A$. Finally, from any experiment $e : \Theta \rightsquigarrow \obs$ we define the \emph{replicated experiment} $e_n : \Theta \rightsquigarrow \obs^n ,\ n\in \{1,2,\dots\}$, with $e_n(\theta) = e(\theta)^n$ the $n$-fold product of $e(\theta)$. 

\subsection{Loss and Risk}
One assesses the \emph{consequence} of actions through a \emph{loss} $L : \Theta \times A \rightarrow \RR$. It is sometimes useful to work with losses in curried form. From any loss $L$ and action $a \in A$, define $L_a \in \RR^ \Theta$ with  $L_a(\theta) = L(\theta,a)$. We measure the size of a loss function by its supremum norm $\lVert L \rVert_\infty = \sup_{\theta,a} |L(\theta,a)|$. If $P\in\PP(\Theta)$ and $Q \in \PP(A)$ we overload our notation with $L(P,Q) = \EE_{\theta \dist P} \EE_{a \dist Q} L(\theta,a)$.
\\
\\
Normally, we are not interested in the absolute loss of an action, rather its loss relative to the best action, defined formally as the \emph{regret} $\relloss(\theta,a) = L(\theta,a) - \inf_{a'} L(\theta,a')$. We measure the performance of an algorithm $\alg$ by the \emph{risk}
$$
\riskL{L}(e, \theta, \alg) = \EE_{z \dist e(\theta)} \EE_{a \dist \alg(z)} \relloss(\theta,a).
$$
For the sake of comparison by a single number either the max risk or the average risk with respect to a distribution $P_{\Theta} \in \PP(\obs)$ can be used. We define a \emph{learning problem} to be a pair $(L, e)$ with $L : \Theta \times A \rightarrow \RR$ a loss and $e : \Theta \rightsquigarrow \obs$ an experiment. We measure the difficulty of a learning problem by the \emph{minimax risk}
$$
\Rbar{L}(e) =  \inf_{\alg} \sup_{\theta} \riskL{L}(e, \theta, \alg).
$$
Normally we are not concerned with the quality of a learning algorithm for observation of a single $z\in \obs$. Rather we wish to know the rate at which the risk decreases as the number of replications of the experiment grows. Hence the prime quantity of interest is $\Rbar{L}(e_n)$.

\subsection{Statistics vs Machine Learning}

While the ideas of the previous subsections originated in theoretical statistics \cite{Torgersen1991,Cam2011,Blackwell:1954,Ferguson1967} they can be readily applied to machine learning problems. The main distinction is that statistics focuses on \emph{parametric families} and loss functions of type $L : \Theta \times \Theta \rightarrow \RR$. The goal is to accurately \emph{reconstruct parameters}. In machine learning one is interested in \emph{predicting the observations} of the experiment well. There the focus is on problems with $\Theta = \PP(\obs)$ and loss functions of the form $L(\theta, a) = \EE_{z \dist P_{\theta}} \ell(z,a)$, where $\ell : \obs \times A \rightarrow \RR$ measures how well $a$ predicts the observation $z$. Our focus is on problems of the second sort, however abstractly there is no real difference. Both are just different learning problems. When clear we use $\ell(P,a)$ and $L(P,a)$ interchangeably.

\subsubsection{Supervised Learning}

In Table 1 we explain the mapping of supervised learning into our abstract language. We focus on the problem of conditional probability estimation of which learning a binary classifier is a special case. Letting $X$ be the instance space and $Y$ the label space we have

\begin{center}
\captionof{table}{Supervised Learning}
\begin{tabular}{l||l}
	   Unknowns $\Theta$  & Distributions of instance, label pairs, $\PP(X\times Y)$ \\ 
\hline Observation Space $\obs$ & $n$ instance label pairs $(X \times Y)^n$.  \\ 
\hline Action Space $A$ & Function class $\mathcal{F} \subseteq \PP(Y)^X$ \\ 
\hline Experiment $e$ & Maps each $P \in \PP(X\times Y)$ to itself \\ 
\hline Loss $L$ & $L(\theta,f) = \EE_{(x,y)\dist P_\theta} \ell(y, f(x))$ \\ 

\end{tabular}
\end{center}
We have  
$$
\riskL{L} (e_n, P, \alg) = \EE_{S \dist P^n} \EE_{f\dist \alg(S)} \EE_{(x,y) \dist P} \ell(y,f(x)) - \inf_{f \in \mathcal{F}} \EE_{(x,y) \dist P} \ell(y,f(x))
$$
a standard object of study in learning theory \cite{Bousquet2004}.

\subsection{Corrupted Learning}

In corrupted learning, rather than observing $z \in \obs$, one observes a corrupted $\tilde{z}$ in a different observation space $\tilde{\obs}$. We model the corruption process through a Markov kernel $T : \obs \rightsquigarrow \tilde{\obs}$ and define a corrupted learning problem to be the triple $(L,e,T)$. For convenience we define the corrupted experiment $\tilde{e} = T e$. Ideally we wish to compare $\Rbar{L}(\tilde{e}_n) $ with $\Rbar{L}(e_n)$. By general forms of the information processing theorem \cite{Reid2009b,Garca-Garca} $\Rbar{L}(\tilde{e}_n) \geq \Rbar{L}(e_n)$, however this does not allow one to \emph{rank} the utility of \emph{different} $T$.
\\
\\
Even after many years of directed research, in general we can not compute $\Rbar{L}(e_n)$ exactly, let alone $\Rbar{L}(\tilde{e}_n)$ for general corruptions. Consequently our effort for the remaining turns to upper and lower bounds of $\Rbar{L}(\tilde{e}_n)$.


\section{Upper Bounds for Corrupted Learning}\label{sec:upper-bounds-for-corrupted-learning}

When convenient we use the shorthand $T(P) = \tilde{P}$. \cite{Natarajan2013} introduced a method of learning classifiers from data subjected to label noise, termed the \emph{method of unbiased estimators}. Here we show that this method can be generalized to other corruptions. Firstly, $\PP(\obs) \subseteq (\RR^\obs)^*$, the dual space of $\RR^\obs$. We use the notation $\langle P, f \rangle = \EE_{z \dist P} f(z)$. From any markov kernel $T : \obs \rightsquigarrow \tilde{\obs}$, we obtain a linear map $T : (\RR^\obs)^* \rightarrow (\RR^{\tilde{\obs}} )^*$ with
$$
\ip{T(\alpha)}{\tilde{f}} = \ip{\alpha}{T^*(\tilde{f})} ,\ \forall \tilde{f} \in \RR^{\tilde{\obs}}
$$
where $T^*(\tilde{f})(z) = \EE_{\tilde{z} \dist T(z)} \tilde{f}(\tilde{z})$ is the \emph{pullback} of $\tilde{f}$ by $T$. In terms of matrices $T^*$ is the transpose or \emph{adjoint} of T.

\begin{definition}
A Markov kernel $T : \obs \rightsquigarrow \tilde{\obs}$ is \emph{reconstructible} if $T$ has a left inverse, there exists a linear map $R : (\RR^{\tilde{\obs}})^* \rightarrow (\RR^\obs)^*$ such that $R T = I$.
\end{definition}
Intuitively, $T$ is reconstructible if there is some transformation that ``undoes" the effects of $T$. In general $R$ is not a Markov kernel. Many forms of corrupted learning are reconstructible, including semi-supervised learning, learning with label noise and learning with partial labels for all but a few pathological cases. The reader is directed to \ref{Examples} for worked examples.
\\
\\
We call a left inverse of $T$ a \emph{reconstruction}. For concreteness, one can always take 
$$
R = (T^* T)^{-1} T^*
$$
the Moore-Penrose pseudo inverse of $T$. Reconstructible Markov kernels are exactly those where we can \emph{transfer} a loss function from the clean distribution to the corrupted distribution. We have by properties of adjoints
$$
\ip{P}{f} = \ip{R T (P)}{f} = \ip{T(P)}{R^*(f)}.
$$
In words, to take expectations of $f$ with samples from $\tilde{P}$ we use the corruption corrected $\tilde{f} = R^*(f)$. 

\begin{theorem}[Corruption Corrected Loss]\label{MUBE}
For all reconstructible $T : \obs \rightsquigarrow \tilde{\obs}$, loss functions $\ell : \obs \times A \rightarrow \RR$ and reconstructions $R$ define the \emph{corruption corrected} loss $\tilde{\ell} : \tilde{\obs} \times A \rightarrow \RR$, with  $\tilde{\ell}_a = R^*\ell_a$. Then for all distributions $P \in \PP(\obs)$, $\ell(P,a) = \tilde{\ell}(\tilde{P},a)$.

\end{theorem}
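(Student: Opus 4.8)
The plan is essentially to specialize the adjoint identity displayed just before the statement, namely $\ip{P}{f} = \ip{RT(P)}{f} = \ip{T(P)}{R^*f}$, to the choice $f = \ell_a$. The whole claim is a short dual-pairing computation, so the work lies in identifying the right objects rather than in any estimate.

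First I would unwind both sides of the asserted equality through the overloaded notation $\ell(P,a) = \EE_{z\dist P}\ell(z,a) = \ip{P}{\ell_a}$. The clean risk is thus the pairing of $P$ against the curried loss $\ell_a \in \RR^\obs$. Symmetrically, $\tilde\ell(\tilde P,a) = \ip{\tilde P}{\tilde\ell_a}$, and substituting $\tilde P = T(P)$ together with the definition $\tilde\ell_a = R^*\ell_a$ rewrites the right-hand side as $\ip{T(P)}{R^*\ell_a}$.

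Next I would invoke reconstructibility and the adjoint property to bridge the two expressions. Inserting $RT = I$ gives $\ip{P}{\ell_a} = \ip{RT(P)}{\ell_a}$, and the adjoint relation $\ip{R\alpha}{g} = \ip{\alpha}{R^*g}$ applied with $\alpha = T(P)$ and $g = \ell_a$ moves $R$ across the pairing to $\ip{T(P)}{R^*\ell_a}$. Chaining, $\ell(P,a) = \ip{P}{\ell_a} = \ip{RT(P)}{\ell_a} = \ip{T(P)}{R^*\ell_a} = \ip{\tilde P}{\tilde\ell_a} = \tilde\ell(\tilde P,a)$. Since $a \in A$ is arbitrary and fixed throughout, there is nothing further to check.

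The only step warranting care, and the closest thing to an obstacle, is the justification of the adjoint move for $R$ as opposed to $T$. The pairing $\ip{T(\alpha)}{\tilde f} = \ip{\alpha}{T^*\tilde f}$ was stated for the Markov kernel $T$, but $R$ is merely a linear map between the finite-dimensional duals $(\RR^{\tilde\obs})^*$ and $(\RR^\obs)^*$ and need not be a Markov kernel. I would observe that, because we work only with finite sets, $R^*$ is literally the transpose of the matrix representing $R$, so $\ip{R\alpha}{g} = \ip{\alpha}{R^*g}$ is the elementary matrix-adjoint identity and requires no probabilistic interpretation of $R$. With that noted, the theorem follows immediately from the displayed identity.
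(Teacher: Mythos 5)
Your proof is correct and takes essentially the same route as the paper, which establishes the theorem directly from the displayed adjoint identity $\ip{P}{f} = \ip{RT(P)}{f} = \ip{T(P)}{R^*(f)}$ specialized to $f = \ell_a$. Your closing remark that the adjoint move for $R$ is just the elementary finite-dimensional transpose identity (since $R$ need not be a Markov kernel) is a sensible clarification but does not alter the argument.
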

We direct the reader to \ref{Examples} for some examples of $\tilde{\ell}$ for different corruptions. Minimizing $\tilde{\ell}$ on a sample $\tilde{S} \dist \tilde{P}$ provides means to learn from corrupted data. Let $\ell(S,a) = \frac{1}{|S|} \sum_{z\in S} \ell(z,a)$, the average loss on the sample. By an application of the PAC Bayes bound (\cite{McAllester1998,Zhang2006,Catoni2007}) one has for all algorithms $\alg : \tilde{\obs}^n \rightsquigarrow A$, priors $\pi \in \PP(A)$ and distributions $P \in \PP(\obs)$
$$
\EE_{\tilde{S}\dist \tilde{P}^n} \tilde{\ell}(\tilde{P},\alg(\tilde{S})) \leq \EE_{\tilde{S}\dist \tilde{P}^n} \tilde{\ell}(\tilde{S},\alg(\tilde{S}))  +  \lVert \tilde{\ell}\rVert_\infty \sqrt{\frac{2 \EE_{\tilde{S}\dist \tilde{P}^n} D_{KL}(\alg(\tilde{S}),\pi)}{n}}.
$$
This bound yields the following theorem.

\begin{theorem}\label{Corrupted PAC Bayes}
For all reconstructible Markov kernels $T : \obs \rightarrow \tilde{\obs}$, algorithms $\alg : \tilde{\obs}^n \rightsquigarrow A$, priors $\pi \in \PP(A)$, distributions $P \in \PP(\obs)$ and bounded loss functions $\ell$
$$
\EE_{\tilde{S}\dist \tilde{P}^n} \ell(P,\alg(\tilde{S})) \leq \EE_{\tilde{S}\dist \tilde{P}^n}\tilde{\ell}(\tilde{S},\alg(\tilde{S}))  +  \lVert \tilde{\ell}\rVert_\infty \sqrt{\frac{2 \EE_{\tilde{S}\dist \tilde{P}^n} D_{KL}(\alg(\tilde{S}),\pi)}{n}}.
$$
\end{theorem}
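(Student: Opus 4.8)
The plan is to combine the pointwise loss identity of Theorem~\ref{MUBE} with the PAC-Bayes bound displayed immediately before the statement; once the various expectations are tracked correctly, the theorem reduces to a single substitution.

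First I would invoke Theorem~\ref{MUBE}, which supplies the identity $\ell(P,a) = \tilde{\ell}(\tilde{P},a)$ for every fixed action $a \in A$ and every $P \in \PP(\obs)$. Since the learning algorithm $\alg : \tilde{\obs}^n \rightsquigarrow A$ is a Markov kernel, its output $\alg(\tilde{S})$ is a distribution over $A$, and under the overloaded notation one has $\ell(P,\alg(\tilde{S})) = \EE_{a \dist \alg(\tilde{S})}\ell(P,a)$. Integrating the pointwise identity over $a \dist \alg(\tilde{S})$ therefore yields, for each realized sample $\tilde{S}$, the equality
$$
\ell(P,\alg(\tilde{S})) = \tilde{\ell}(\tilde{P},\alg(\tilde{S})).
$$

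Next I would take the expectation of both sides over $\tilde{S} \dist \tilde{P}^n$, which is legitimate because the identity holds for each fixed $\tilde{S}$, obtaining $\EE_{\tilde{S}\dist \tilde{P}^n}\ell(P,\alg(\tilde{S})) = \EE_{\tilde{S}\dist \tilde{P}^n}\tilde{\ell}(\tilde{P},\alg(\tilde{S}))$. Substituting this equality into the left-hand side of the cited PAC-Bayes bound produces precisely the claimed inequality. Before applying that bound I would remark that it uses the boundedness of $\tilde{\ell}$, which is automatic here: since we work with finite sets, $\tilde{\ell}_a = R^*\ell_a$ is a finite-dimensional linear image of the bounded $\ell$, so $\lVert \tilde{\ell}\rVert_\infty < \infty$ and the square-root term is well defined.

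The only point requiring care is the bookkeeping of expectations rather than any genuine analytic difficulty: one must apply the identity of Theorem~\ref{MUBE} pointwise in $a$ and then lift it cleanly through both the internal randomness of $\alg$ and the draw of $\tilde{S}$, so that the resulting quantity matches the left-hand side of the PAC-Bayes bound verbatim. I expect no real obstacle beyond this, as the substantive work---the construction of the corruption-corrected loss $\tilde{\ell}$ and the PAC-Bayes concentration inequality---has already been established.
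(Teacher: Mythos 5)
Your proposal is correct and matches the paper's (implicit) argument exactly: the paper derives Theorem~\ref{Corrupted PAC Bayes} by applying the displayed PAC-Bayes bound to the corruption-corrected loss $\tilde{\ell}$ on samples from $\tilde{P}^n$ and then substituting the identity $\ell(P,a) = \tilde{\ell}(\tilde{P},a)$ from Theorem~\ref{MUBE} into the left-hand side. Your careful tracking of the expectation over $a \dist \alg(\tilde{S})$ and over $\tilde{S} \dist \tilde{P}^n$ is exactly the bookkeeping the paper leaves tacit.
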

A similar result also holds with high probability on draws from $\tilde{P}^n$.  If $\alg$ is Empirical Risk Minimization (ERM), $A$ is finite and $\pi$ uniform on $A$ the above analysis yields convergence to the optimum $a\in A$ as $\frac{ \lVert \tilde{\ell}\rVert_\infty}{\sqrt{n}}$ for learning with corrupted data versus $\frac{\lVert \ell\rVert_\infty}{\sqrt{n}}$ for learning with clean data. Therefore, the ratio $\frac{\lVert \tilde{\ell}\rVert_\infty}{\lVert \ell\rVert_\infty}$ measures the relative difficulty of corrupted versus clean learning.

\subsection{Upper Bounds for Combinations of Corrupted Data}\label{sec:upper-bounds-for-combinations-of-corrupted-data}

Recall that our final goal is to be able to make informed economic decisions in regarding the acquisition of data sets. As such, we wish to quantify the utility of a data set comprising different corrupted data. For example in learning with noisy labels out of $n$ datum, there could be $n_1$ clean, $n_2$ slightly noisy and $n_3$ very noisy samples and so on. More generally we assume access to a corrupted sample $\tilde{S}$, made up of $k$ different types of corrupted data, with $\tilde{S}_i \dist \tilde{P}^{n_i}$.

\begin{theorem}\label{Collection of Corrupted PAC Bayes}
Let $T_i : \obs \rightsquigarrow \tilde{\obs}_i$ be a collection of $k$ reconstructible Markov kernels. Let $\tilde{Q} = \otimes_{i=i}^k \tilde{P}_i^{n_i}$ and $\tilde{\obs} = \times_{i=1}^k \tilde{\obs}_i^{n_i}$, $n = \sum_{i=1}^k n_i$ and $r_i = \frac{n_i}{n}$. Then for all algorithms $\alg : \tilde{\obs} \rightsquigarrow A$, priors $\pi \in \PP(A)$, distributions $P \in \PP(\obs)$ and bounded loss functions $\ell$
$$
\EE_{\tilde{S} \dist \tilde{Q}} \ell(P,\alg(\tilde{S})) \leq \EE_{\tilde{S}\dist \tilde{Q}} \sum_{i=1}^k r_i \tilde{\ell}_i(\tilde{S}_i,\alg(\tilde{S})) +  K \sqrt{\frac{2 \EE_{\tilde{S}\dist \tilde{Q}} D_{KL}(\alg(\tilde{S}),\pi)}{n}}.
$$
where $K = \sqrt{\sum\limits_{i=1}^{k}r_i \lVert \tilde{\ell}_i\rVert_{\infty}^2}$.

\end{theorem}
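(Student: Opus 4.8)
The plan is to follow the same PAC-Bayes argument that underlies Theorem \ref{Corrupted PAC Bayes}, but applied to the \emph{non-identically distributed} sample $\tilde{S} \dist \tilde{Q}$ and with the mean of the aggregate loss pinned down by the Corruption Corrected Loss theorem. First I would define the aggregate corrupted loss $\bar{\ell}(\tilde{S},a) = \sum_{i=1}^k r_i \tilde{\ell}_i(\tilde{S}_i,a) = \frac{1}{n}\sum_{i=1}^k \sum_{z \in \tilde{S}_i}\tilde{\ell}_i(z,a)$, which is exactly the empirical term on the right-hand side. The point of the second expression is that $\bar{\ell}$ is an average of $n$ \emph{independent} (though not identically distributed) per-sample losses.

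Next I would identify its expectation. By Theorem \ref{MUBE} applied to each kernel $T_i$, we have $\EE_{\tilde{z}\dist\tilde{P}_i}\tilde{\ell}_i(\tilde{z},a) = \tilde{\ell}_i(\tilde{P}_i,a) = \ell(P,a)$ for every $a$. Since $\sum_i r_i = 1$, taking the expectation of $\bar{\ell}$ under $\tilde{Q}$ and using independence across blocks gives $\EE_{\tilde{S}\dist\tilde{Q}}\bar{\ell}(\tilde{S},a) = \sum_i r_i \ell(P,a) = \ell(P,a)$. So the claimed inequality is precisely a concentration statement for $\bar{\ell}$ around its mean $\ell(P,a)$, uniform over the randomized, data-dependent action produced by $\alg$.

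The core step is a PAC-Bayes bound for the centered process $g_a(\tilde{S}) = \ell(P,a) - \bar{\ell}(\tilde{S},a)$. Because the $n$ coordinates of $\tilde{S}$ are independent, I would bound the moment generating function $\EE_{\tilde{S}\dist\tilde{Q}}\exp(\lambda g_a(\tilde{S}))$ by a product over coordinates; the coordinate in block $i$ contributes $\frac{1}{n}(\ell(P,a) - \tilde{\ell}_i(\tilde{z},a))$, which is mean-zero with range at most $\frac{2\lVert\tilde{\ell}_i\rVert_\infty}{n}$, so Hoeffding's lemma gives a sub-Gaussian factor $\exp(\lambda^2 \lVert\tilde{\ell}_i\rVert_\infty^2/(2n^2))$ per sample. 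Multiplying the $n_i$ factors from each block yields $\EE_{\tilde{S}}\exp(\lambda g_a) \leq \exp(\lambda^2 K^2/(2n))$ with $K^2 = \frac{1}{n}\sum_i n_i \lVert\tilde{\ell}_i\rVert_\infty^2 = \sum_i r_i \lVert\tilde{\ell}_i\rVert_\infty^2$. Feeding this sub-Gaussian bound into the Donsker--Varadhan change-of-measure inequality between the posterior $\alg(\tilde{S})$ and the prior $\pi$, taking $\EE_{\tilde{S}}$ (using Fubini and Jensen to pass the expectation through the logarithm), dividing by $\lambda$, and optimizing at $\lambda = K^{-1}\sqrt{2n\,\EE_{\tilde{S}}D_{KL}(\alg(\tilde{S}),\pi)}$ produces the error term $K\sqrt{2\EE_{\tilde{S}}D_{KL}(\alg(\tilde{S}),\pi)/n}$. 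Rearranging $\EE_{\tilde{S}}\EE_{a\dist\alg(\tilde{S})}g_a \le K\sqrt{\cdots}$ gives the theorem, and setting $k=1$ recovers Theorem \ref{Corrupted PAC Bayes} with $K = \lVert\tilde{\ell}\rVert_\infty$.

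The main obstacle is arranging the concentration so that the block ranges aggregate in the $\ell_2$ sense that produces exactly $K = \sqrt{\sum_i r_i \lVert\tilde{\ell}_i\rVert_\infty^2}$, rather than a looser $\max_i \lVert\tilde{\ell}_i\rVert_\infty$ or $\sum_i r_i \lVert\tilde{\ell}_i\rVert_\infty$. This is precisely what per-coordinate Hoeffding on the independent, unequally scaled terms delivers, so the crux is to run the PAC-Bayes derivation at the level of the moment generating function of the full independent-but-not-iid sample rather than invoking the iid bound of Theorem \ref{Corrupted PAC Bayes} as a black box; the remaining change-of-measure and optimization over $\lambda$ are then routine.
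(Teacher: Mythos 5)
Your proposal is correct and follows essentially the same route as the paper: the paper likewise forms the aggregate loss over the blocks, applies a change-of-measure PAC-Bayes inequality (its Theorem \ref{PAC-Bayes}) to the full independent-but-not-iid sample, bounds the per-coordinate cumulant generating functions by a Hoeffding-type lemma so the block ranges combine as $\sum_i r_i\lVert\tilde{\ell}_i\rVert_\infty^2$, uses reconstructibility to identify the mean with $\ell(P,a)$, and optimizes the free parameter $\beta$ (your $\lambda$). The only differences are notational (sum versus average of the per-sample losses, and invoking Donsker--Varadhan directly rather than the packaged theorem).
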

A similar result also holds with high probability on draws from $Q$. Theorem \ref{Collection of Corrupted PAC Bayes} is a generalization of the final bound appearing in \cite{Crammer2005} that only pertains to symmetric label noise and binary classification. Theorem \ref{Collection of Corrupted PAC Bayes} suggest the following means of choosing data sets. Let $c_i$ be the cost of acquiring data corrupted by $T_i$ and $C$ the maximum total cost. First, choose data from the $T_i$ with lowest $c_i \lVert \tilde{\ell}_i\rVert_{\infty}^2$ until picking more violates the budget constraint. Then choose data from the second lowest and so on.

\section{Lower Bounds for Corrupted Learning}

Thus far we have developed upper bounds for ERM style algorithms. In particular we have found that reconstructible corruption does not effect the \emph{rate} at which learning occurs, it only effects constants in the upper bound. Can we do better? Are these constants \emph{tight}? To answer this question we develop lower bounds for corrupted learning.
\\
\\
Here we review Le Cam's method \cite{Cam2011} a powerful technique for generating lower bounds for learning problems that very often gives the correct rate and dependence on constants (including being able to reproduce the standard VC dimension lower bounds for classification presented in \cite{Massart2006}). In recent times it has been used to establish lower bounds for: differentially private learning \cite{Duchi2013}, learning in a distributed set up \cite{Zhang2013}, function evaluations required in convex optimization \cite{Agarwal2009} as well as generic lower bounds in statistical estimation problems \cite{Yang1999}. We show how this method can be extended using the strong data processing theorem \cite{Boyen1998,Cohen1998} to provide a general tool for lower bounding corrupted learning problems. 
\subsection{Le Cam's Method and Minimax Lower Bounds}

Le Cam's method proceeds by reducing a general learning problem to an easier binary classification problem, before relating the best possible performance on this classification problem to the minimax risk. Define the \emph{separation} $\rho : \Theta \times \Theta \rightarrow \RR$, $\rho(\theta_1,\theta_2) = \inf_a \relloss(\theta_1,a) + \relloss(\theta_2,a)$. The separation measures how hard it is to act well against both $\theta_1$ and $\theta_2$ simultaneously. We have the following (see section \ref{sec:le-cam's-method-and-minimax-lower-bounds PROOF} for a more detailed treatment).

\begin{lemma}\label{Le Cam Lemma}
For all experiments $e$, loss functions $L$ and $\theta_1, \theta_2 \in \Theta$
$$
\Rbar{L} (e) \geq  \rho(\theta_1,\theta_2) \left(\frac{1}{4} - \frac{1}{4} V(e(\theta_1), e(\theta_2)) \right).
$$
where $V$ is the variational divergence.
\end{lemma}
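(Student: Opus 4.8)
The plan is to run the classical two-point (Le Cam) reduction: collapse the minimax estimation problem onto the two hypotheses $\theta_1,\theta_2$, convert actions into a binary test, and then invoke the testing characterisation of the variational divergence. First I would discard every hypothesis except $\theta_1,\theta_2$ and replace the supremum over $\Theta$ by a two-point average, which holds uniformly in $\alg$:
$$
\sup_\theta \riskL{L}(e,\theta,\alg) \ge \max_{i\in\{1,2\}}\riskL{L}(e,\theta_i,\alg) \ge \tfrac12\big(\riskL{L}(e,\theta_1,\alg)+\riskL{L}(e,\theta_2,\alg)\big).
$$
This is where the first of the two factors of $\tfrac12$ (whose product is the $\tfrac14$ in the statement) is born, and since it is valid for every $\alg$ it survives passing to $\inf_\alg$.

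Next I would turn actions into a test. Define $\psi(a)=1$ when $\relloss(\theta_1,a)\le\relloss(\theta_2,a)$ and $\psi(a)=2$ otherwise, i.e.\ choose the hypothesis against which $a$ has the smaller regret. The definition of the separation gives the pointwise bound $\relloss(\theta_1,a)+\relloss(\theta_2,a)\ge\rho(\theta_1,\theta_2)$ for \emph{every} $a$; combined with the case split defining $\psi$ this forces $\relloss(\theta_j,a)\ge\tfrac12\rho(\theta_1,\theta_2)\,\ind[\psi(a)\neq j]$, because whenever $\psi$ errs against the true $\theta_j$ the action incurs the larger of the two regrets, which is then at least half of their sum. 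Since regrets are nonnegative, integrating over $z\dist e(\theta_j)$ and $a\dist\alg(z)$ yields $\riskL{L}(e,\theta_j,\alg)\ge\tfrac12\rho(\theta_1,\theta_2)\,\PP(\psi\neq j\mid\theta_j)$, where the composite $\psi\circ\alg$ is a randomised test fed data distributed as $e(\theta_j)$. This produces the second factor of $\tfrac12$.

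Finally I would lower bound the summed error probability. For any test, randomised or not, the Neyman--Pearson characterisation of total variation gives $\PP(\psi\neq1\mid\theta_1)+\PP(\psi\neq2\mid\theta_2)\ge 1-V(e(\theta_1),e(\theta_2))$. Summing the two risk inequalities and substituting yields
$$
\riskL{L}(e,\theta_1,\alg)+\riskL{L}(e,\theta_2,\alg)\ge \tfrac12\rho(\theta_1,\theta_2)\big(1-V(e(\theta_1),e(\theta_2))\big),
$$
and chaining this with the two-point averaging step and taking $\inf_\alg$ delivers $\Rbar{L}(e)\ge\rho(\theta_1,\theta_2)(\tfrac14-\tfrac14 V(e(\theta_1),e(\theta_2)))$, as claimed.

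I expect the main obstacle to be the testing step rather than the arithmetic. One must (i) verify that $\psi\circ\alg$ is a legitimate randomised test, so that the total-variation lower bound applies to it and not merely to deterministic tests, and (ii) treat the infimum defining $\rho$ carefully: the key inequality $\relloss(\theta_j,a)\ge\tfrac12\rho\,\ind[\psi(a)\neq j]$ must be established for each fixed $a$ (where $\rho$ supplies a uniform lower bound on the regret sum) \emph{before} integrating against $e(\theta_j)$ and $\alg$. Everything else---nonnegativity of regret and linearity of expectation through the randomisation in $\alg$---is routine.
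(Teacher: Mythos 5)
Your proof is correct and follows essentially the same route as the paper: a two-point reduction of the supremum to an average over $\{\theta_1,\theta_2\}$, conversion of the action into a binary test, and the standard bound $\PP(\text{err}\mid\theta_1)+\PP(\text{err}\mid\theta_2)\ge 1-V(e(\theta_1),e(\theta_2))$. The only cosmetic difference is that you define the minimum-regret test $\psi$ directly and bound $\relloss(\theta_j,a)\ge\tfrac12\rho\,\ind[\psi(a)\neq j]$ pointwise, whereas the paper routes through Markov's inequality with a threshold $\delta$ and a three-valued classifier before setting $\delta=\rho/2$; both yield the identical constant.
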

This lower bound is a trade off between distances measured by $\rho$ and statistical distances measured by the variational divergence. A learning problem is easy if proximity in variational divergence of $e(\theta_1)$ and $e(\theta_2)$ (hard to distinguish $\theta_1$ and $\theta_2$ statistically) implies proximity of $\theta_1$ and $\theta_2$ in $\rho$ (hard to distinguish $\theta_1$ and $\theta_2$ with actions).
\\
\\
If there exists $\theta_1, \theta_2$ with $e(\theta_1) = e(\theta_2)$ and $\rho(\theta_1,\theta_2) > 0$ we instantly get that the minimax regret must be positive. For corrupted experiments, if $T$ is not reconstructible it may be the case that  $T e(\theta_1) = T e(\theta_2)$ for some $\theta_1, \theta_2$. Hence we assume that $T$ is reconstructible.

\subsubsection{Replication and Rates}\label{Duchi Method}

We wish to lower bound how the risk decreases as $n$ grows. When working with replicated experiments it can be advantageous to work with an $f$-divergence (see section \ref{sec:a-generic-strong-data-processing-theorem.}) different to variational divergence and to invoke a generalized Pinkser inequality \cite{Reid2009b}. Common choices in theoretical statistics are the Hellinger and alpha divergences \cite{Guntuboyina2011} as well as the KL divergence \cite{Duchi2013}. Here we use the variational divergence and the following lemma.

\begin{lemma}\label{Variational Divergence for product distribitions}
For all collections of distributions $P_i, Q_i \in \PP(\obs_i)$, $i \in [1 ; k]$
$$
V(\otimes_{i=1}^k P_i, \otimes_{i=1}^k Q_i) \leq \sum_{i=1}^k V(P_i,Q_i)
$$
\end{lemma}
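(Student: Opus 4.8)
The plan is to prove the bound by a telescoping argument resting on two ingredients: that the variational divergence $V$ obeys the triangle inequality, and that the divergence between two product distributions sharing all but one factor collapses to the divergence of that single differing factor. Since we have restricted throughout to finite sets, I would begin by recalling the representation $V(P,Q) = \frac{1}{2}\sum_{z}|P(z) - Q(z)|$, so that $V$ is, up to the constant $\frac{1}{2}$, the $\ell^1$ distance between the probability vectors. The triangle inequality for $V$ is then inherited immediately from the triangle inequality for $\lVert \cdot \rVert_1$, and no separate argument is needed.

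Next I would interpolate between the two products one coordinate at a time. For $j \in \{0,1,\dots,k\}$ define the hybrid
$$
R_j = Q_1 \otimes \cdots \otimes Q_j \otimes P_{j+1} \otimes \cdots \otimes P_k,
$$
so that $R_0 = \otimes_{i=1}^k P_i$ and $R_k = \otimes_{i=1}^k Q_i$. Chaining the triangle inequality along this sequence gives
$$
V\Big(\otimes_{i=1}^k P_i, \otimes_{i=1}^k Q_i\Big) \leq \sum_{j=1}^k V(R_{j-1}, R_j),
$$
reducing the problem to evaluating each single-step term $V(R_{j-1}, R_j)$.

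The crux is then to show $V(R_{j-1}, R_j) = V(P_j, Q_j)$. The distributions $R_{j-1}$ and $R_j$ agree in every coordinate except the $j$-th, where one carries $P_j$ and the other $Q_j$. Writing the $\ell^1$ form and pulling the common factors out of the absolute value, the sum separates as
$$
V(R_{j-1}, R_j) = \frac{1}{2}\Big(\prod_{i<j}\textstyle\sum_{z_i} Q_i(z_i)\Big)\Big(\sum_{z_j} |P_j(z_j) - Q_j(z_j)|\Big)\Big(\prod_{i>j}\textstyle\sum_{z_i} P_i(z_i)\Big),
$$
and each bracketed sum over a shared factor equals $1$ because these are genuine probability distributions, leaving exactly $V(P_j, Q_j)$. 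Substituting back into the telescoped bound yields $\sum_{j=1}^k V(P_j, Q_j)$, the claim. The only step demanding genuine care is this factorization: one must confirm that the shared factors, held constant while summing the modulus over the $j$-th coordinate, separate as a product of the single-coordinate $\ell^1$ discrepancy and the total masses of the remaining factors. Everything else is bookkeeping. An equivalent presentation would prove only the $k=2$ case $V(P_1 \otimes P_2, Q_1 \otimes Q_2) \le V(P_1,Q_1) + V(P_2,Q_2)$ and then induct on $k$; the telescoping form above is simply the unrolled version of that induction.
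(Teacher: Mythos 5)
Your proposal is correct and follows essentially the same route as the paper: the paper also uses the triangle inequality for $V$ with an intermediate hybrid distribution ($Q_1 \otimes (\otimes_{i=2}^k P_i)$) and then inducts on $k$, which is exactly the rolled-up form of your telescoping chain. The only difference is that you explicitly carry out the factorization showing $V(R_{j-1},R_j)=V(P_j,Q_j)$, a step the paper dismisses as ``easily verified from the definition of $V$''; your verification of it is correct.
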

Here we make use of the specific case where $P_i = P$ and $Q_i = Q$ for all $i$.

\begin{lemma}\label{Replicated LeCam}
For all experiments $e$, loss functions $L$, $\theta_1, \theta_2 \in \Theta$ and $n$
$$
\Rbar{L}(e_n) \geq \rho(\theta_1,\theta_2) \left( \frac{1}{4} - \frac{n}{4} V(e(\theta_1),e(\theta_2)) \right).
$$
\end{lemma}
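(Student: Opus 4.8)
The plan is to combine Lemma~\ref{Le Cam Lemma} (applied to the replicated experiment $e_n$) with Lemma~\ref{Variational Divergence for product distribitions} to control the variational term. Recall that the replicated experiment is defined by $e_n(\theta) = e(\theta)^n$, the $n$-fold product distribution. So the natural first step is to instantiate the Le Cam bound of Lemma~\ref{Le Cam Lemma} with the experiment $e_n$ in place of $e$, giving
$$
\Rbar{L}(e_n) \geq \rho(\theta_1,\theta_2) \left( \frac{1}{4} - \frac{1}{4} V(e_n(\theta_1), e_n(\theta_2)) \right).
$$
Here I would note that $\rho$ is a property of the loss $L$ and the unknowns $\theta_1,\theta_2$ only, and does not depend on the experiment, so it is unchanged by replication.

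The second step is to bound the variational divergence between the two product distributions. Since $e_n(\theta_1) = e(\theta_1)^n = \otimes_{i=1}^n e(\theta_1)$ and likewise $e_n(\theta_2) = \otimes_{i=1}^n e(\theta_2)$, this is exactly the product-distribution setting of Lemma~\ref{Variational Divergence for product distribitions} with the special case $P_i = e(\theta_1)$ and $Q_i = e(\theta_2)$ for every $i$ (as the text flags immediately after that lemma). Applying it yields
$$
V(e_n(\theta_1), e_n(\theta_2)) \leq \sum_{i=1}^n V(e(\theta_1), e(\theta_2)) = n\, V(e(\theta_1), e(\theta_2)).
$$

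The final step is simply to substitute this upper bound on $V(e_n(\theta_1), e_n(\theta_2))$ into the displayed Le Cam inequality. Because the variational term enters with a negative sign, replacing $V(e_n(\theta_1),e_n(\theta_2))$ by the larger quantity $n\,V(e(\theta_1),e(\theta_2))$ can only decrease the right-hand side, so the inequality is preserved and we obtain
$$
\Rbar{L}(e_n) \geq \rho(\theta_1,\theta_2)\left( \frac{1}{4} - \frac{n}{4} V(e(\theta_1),e(\theta_2)) \right),
$$
as claimed. The one point requiring a little care is the direction of the inequality at this substitution: one must check that $\rho(\theta_1,\theta_2) \geq 0$ so that enlarging the subtracted term genuinely weakens the bound in the correct direction. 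Since $\rho(\theta_1,\theta_2) = \inf_a \relloss(\theta_1,a) + \relloss(\theta_2,a)$ is a sum of regrets, each of which is nonnegative by definition, this holds and the monotonicity argument goes through. I do not anticipate any serious obstacle here; the result is essentially a mechanical consequence of the two cited lemmas, with the only subtlety being the sign-tracking in the last substitution.
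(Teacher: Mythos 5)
Your proposal is correct and matches the paper's intended argument exactly: the paper states Lemma \ref{Replicated LeCam} immediately after Lemma \ref{Variational Divergence for product distribitions} with the remark that the special case $P_i = P$, $Q_i = Q$ is what is used, i.e.\ apply Lemma \ref{Le Cam Lemma} to $e_n$ and bound $V(e(\theta_1)^n, e(\theta_2)^n) \leq n\, V(e(\theta_1), e(\theta_2))$. Your additional check that $\rho(\theta_1,\theta_2) \geq 0$ so the substitution preserves the inequality is a correct and worthwhile detail the paper leaves implicit.
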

To use lemma \ref{Replicated LeCam}, one defines $\theta_1 = \phi_1(n)$ and $\theta_2 = \phi_2(n)$ for $n \in [0,\infty)$, with the property 
$$
\frac{1}{4} - \frac{n}{4} V(e(\theta_1),e(\theta_2)) \geq \frac{1}{8}
$$ 
or equivalently $V(e(\theta_1), e(\theta_2)) \leq \frac{1}{2 n}$. This yields a lower bound of 
\begin{equation*}
\Rbar{L}( e_n)  \geq \frac{1}{8} \rho(\phi_1(n), \phi_2(n) ).
\end{equation*}
To obtain \emph{tight} lower bounds, $\phi$ needs to be designed in a problem dependent fashion. However, as our goal here is to reason \emph{relatively} we assume that $\phi$ is given.

\subsubsection{Other Methods for Obtaining Minimax Lower Bounds}\label{Other Methods}

There are many other techniques for lower bounds in terms of functions of pairwise $KL$ divergences \cite{Yu1997} (for example Assouad's method) as well as functions of pairwise f-divergences \cite{Guntuboyina2011}. While such methods are often required to get tighter lower bounds, all of what follows can be applied to these more intricate lower bounding techniques. Therefore, for the sake of conceptual clarity, we proceed with Le Cam's method.

\subsection{Measuring the Amount of Corruption}

Rather than the experiment $e$, in corrupted learning we work with the corrupted experiment $\tilde{e}$. By the information processing theorem for $f$-divergences \cite{Reid2009b}, states that 
$$
V(T(P),T(Q)) \leq V(P,Q) ,\ \forall P, Q
$$ 
Thus any lower bound achieved by Le Cam's method for $e$ can be directly transferred to one for $\tilde{e}$. This is just a manifestation of theorems presented in \cite{Reid2009b,Garca-Garca} and alluded to in section 2.4. However, this provides us with no means to rank different $T$. For some $T$, the information processing theorem can be \emph{strengthened}, in the sense that one can find $\alpha(T) < 1$ such that
$$
\forall P,Q,\ V(T(P),T(Q)) \leq \alpha(T)V(P,Q).
$$ 
The coefficient $\alpha(T)$ provides a means to measure the amount of corruption present in $T$. For example if $T$ is constant and maps all $P$ to the same distribution, then $\alpha(T)=0$. If $T$ is an invertible function, then $\alpha(T) = 1$. Together with lemma \ref{Replicated LeCam} this strong information processing theorem \cite{Cohen1998} leads to meaningful lower bounds that allow the comparison of different corrupted experiments.

\subsection{A Generic Strong Data Processing Theorem.}\label{sec:a-generic-strong-data-processing-theorem.}
Following \cite{Cohen1998}, we present a strong data processing theorem that works for all $f$-divergences. 
\begin{definition}
Let $X$ be a set and $f : \RR_+ \rightarrow \RR$ a convex function with $f(1) = 0$. For all distributions $P,Q \in \PP(X)$ the \emph{$f$-divergence} between $P$ and $Q$ is 
$$
D_f(P,Q) = \int_X f\left(\frac{dQ}{dP}\right) dP.
$$
\end{definition}
Both the variational and KL divergence are examples of $f$ divergences. For fixed $T$ we seek an $\alpha(T)$ such that 
$$
D_f(T(P),T(Q)) \leq \alpha(T) D_f(P,Q) \ \forall P, Q, f. 
$$
To do so we first relate the amount $T$ \emph{contracts} $P$ and $Q$ to a certain deconstruction for Markov kernels before proving when such a deconstruction can occur.

\begin{lemma}\label{Deconstruction and KL Lemma}
For all Markov kernels $T : X \rightsquigarrow Y$ and distributions $P,Q \in \PP(X)$, if there exists $F,G \in M(X,Y)$ and $\lambda \in [0,1]$ such that $T = \lambda F + (1-\lambda) G$ with $F(P) = F(Q)$ then $\Df(T(P),T(Q)) \leq (1-\lambda) \Df(P,Q)$.
\end{lemma}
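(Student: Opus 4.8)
The plan is to exploit two standard properties of $f$-divergences: their \emph{joint convexity} in the pair of arguments, and the ordinary information processing inequality $\Df(G(P),G(Q)) \le \Df(P,Q)$ valid for any Markov kernel $G$ (already invoked in the text via \cite{Reid2009b}). The role of the decomposition $T = \lambda F + (1-\lambda) G$ is to split $T(P)$ and $T(Q)$ into a shared piece that contributes nothing to the divergence and a residual piece that the information processing inequality controls.

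First I would record how $T$ acts on distributions. Since $T(x) = \lambda F(x) + (1-\lambda) G(x)$ for every $x$, linearity of the pushforward gives $T(P) = \lambda F(P) + (1-\lambda) G(P)$ and similarly for $Q$. Writing $M := F(P) = F(Q)$ for the common image, this reads
$$
T(P) = \lambda M + (1-\lambda) G(P), \qquad T(Q) = \lambda M + (1-\lambda) G(Q).
$$
Next I would apply joint convexity of $\Df$ to these two convex combinations, pairing the $M$ term with itself and the two $G$ terms with each other:
$$
\Df(T(P),T(Q)) \;\le\; \lambda\, \Df(M,M) + (1-\lambda)\, \Df(G(P),G(Q)).
$$
Because $\Df(M,M) = \int f(1)\,dM = 0$ (as $f(1)=0$), the first term vanishes, leaving $\Df(T(P),T(Q)) \le (1-\lambda)\,\Df(G(P),G(Q))$. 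Finally, applying the information processing inequality to the Markov kernel $G$ gives $\Df(G(P),G(Q)) \le \Df(P,Q)$, and chaining the two inequalities yields the claim.

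The only genuinely nontrivial ingredient is joint convexity of $\Df$; everything else is bookkeeping. I would either cite it as a standard fact or derive it from convexity of $f$ via the perspective function $(p,q) \mapsto p\, f(q/p)$, which is jointly convex and whose coordinatewise sum over the finite set is exactly $\Df(P,Q)$; evaluating it at coordinatewise convex combinations and summing gives the inequality. The main obstacle is thus simply invoking (or justifying) joint convexity cleanly rather than any hard computation. Two degenerate cases deserve a one-line check: when $\lambda = 1$ the kernel $G$ is irrelevant and $T(P)=T(Q)=M$ forces both sides to zero, while finiteness of $\Df$ is guaranteed by the standing finite-set assumption, so neither case causes difficulty.
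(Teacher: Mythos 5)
Your proof is correct and follows essentially the same route as the paper's: linearity of the pushforward, joint convexity of $\Df$ to peel off the vanishing $\Df(F(P),F(Q))$ term, and the ordinary data processing inequality applied to $G$. The extra remarks on deriving joint convexity from the perspective function and on the $\lambda=1$ edge case are fine but not needed beyond what the paper already cites.
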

Hence the amount $T$ contracts $P$ and $Q$ is related to the amount of $T$ that fixes $P$ and $Q$. We seek the largest $\lambda$ such that a decomposition $T = \lambda F + (1-\lambda) G$ is always possible, no matter what pair of distributions $F$ is required to fix.

\begin{lemma}\label{Existence of Decontruction Lemma}
For all Markov kernels $T : X \rightsquigarrow Y$ define $\lambda(T) = \min_{i,j} \sum_k \min(T_{k,i}, T_{k,j})$. Then $\lambda \leq \lambda(T)$ if and only if for all pairs of distributions $P,Q$ there exists a decomposition $$T = \lambda F + (1-\lambda) G$$ with $F,G\in M(X,Y)$ and $F(P) = F(Q)$. 
\end{lemma}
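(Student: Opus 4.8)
The plan is to prove the two implications separately, using throughout the identity $\sum_k \min(T_{k,i},T_{k,j}) = 1 - V(T(\delta_i),T(\delta_j))$, so that $\lambda(T)$ is exactly one minus the Dobrushin coefficient of $T$. The forward (necessity) implication is easy and the converse (sufficiency) is where the work lies.

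For necessity, I would instantiate the hypothesis at the point masses $P = \delta_i$, $Q = \delta_j$. Since $F$ is linear, $F(\delta_i) = F(\delta_j)$ merely says that columns $i$ and $j$ of $F$ coincide; call the common column $w \in \PP(Y)$. From $T = \lambda F + (1-\lambda)G$ with $G$ nonnegative, column $i$ gives $T_{k,i} \ge \lambda w_k$ and column $j$ gives $T_{k,j} \ge \lambda w_k$, hence $\lambda w_k \le \min(T_{k,i},T_{k,j})$. Summing over $k$ and using $\sum_k w_k = 1$ yields $\lambda \le \sum_k \min(T_{k,i},T_{k,j})$; minimizing over $i,j$ gives $\lambda \le \lambda(T)$.

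For sufficiency, given $\lambda \le \lambda(T)$ and arbitrary $P,Q$, I would construct $F$ explicitly. First note $F(P) = F(Q) \iff F(P-Q) = 0$, and (discarding the trivial cases $P=Q$ and $\lambda = 0$, where $F=T$ or any constant kernel works) split $v = P - Q = v^+ - v^-$ into its nonnegative parts, supported on disjoint index sets $I^+, I^-$; let $s = \sum v^+ = \sum v^-$ and normalize $\bar v^\pm = v^\pm/s \in \PP(X)$. Since $\lambda \le \lambda(T) \le \sum_k \min(T_{k,i},T_{k,j}) =: o_{ij}$ for every pair, each overlap distribution $\hat m^{ij}_k := \min(T_{k,i},T_{k,j})/o_{ij}$ is well defined and satisfies $\lambda \hat m^{ij}_k \le T_{k,i}$ and $\lambda \hat m^{ij}_k \le T_{k,j}$. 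Using the product coupling of $\bar v^+$ and $\bar v^-$, I would set the columns of $F$ to be $f_i = \sum_{j \in I^-} \bar v^-_j \hat m^{ij}$ for $i \in I^+$, $f_j = \sum_{i \in I^+} \bar v^+_i \hat m^{ij}$ for $j \in I^-$, and $f_l = T_{\cdot,l}$ for all remaining $l$.

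Then three checks finish the proof. Each column is a convex combination of distributions, so $F \in M(X,Y)$. The entrywise domination $\lambda f_{i,k} = \sum_j \bar v^-_j (\lambda \hat m^{ij}_k) \le T_{k,i}\sum_j \bar v^-_j = T_{k,i}$ (and symmetrically on $I^-$, trivially off $I^+ \cup I^-$) shows $T - \lambda F \ge 0$, so $G = (T-\lambda F)/(1-\lambda)$ is nonnegative; its columns automatically sum to $1$, hence $G \in M(X,Y)$. Finally the coupling identity $\sum_{i} \bar v^+_i f_i = \sum_{i,j}\bar v^+_i \bar v^-_j \hat m^{ij} = \sum_j \bar v^-_j f_j$ gives $F v^+ = F v^-$, i.e. $F(P) = F(Q)$. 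I expect the construction itself to be the main obstacle: the insight is that the pairwise overlap distributions $\hat m^{ij}$, mixed according to a coupling of the positive and negative parts of $P-Q$, simultaneously respect the per-column domination $\lambda F \le T$ and the mixing identity $F(P)=F(Q)$. Isolating the disjoint-support reduction as a first step is what makes such a coupling available, after which the verifications are routine.
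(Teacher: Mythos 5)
Your proof is correct, and it actually does more than the paper does. For the direction ``decomposition exists for all $P,Q$ $\Rightarrow$ $\lambda \le \lambda(T)$'' your argument is essentially identical to the paper's: instantiate at point masses $\delta_i,\delta_j$, observe the two columns of $F$ coincide in a common distribution $w$, use nonnegativity of $(1-\lambda)G$ to get $\lambda w_k \le \min(T_{k,i},T_{k,j})$, and sum over $k$. For the converse direction the paper gives no proof at all --- it simply cites Lemma 2 of Boyen and Koller (1998) --- whereas you supply an explicit construction: reduce to $v = P-Q$ with disjointly supported normalized parts $\bar v^{\pm}$, build each relevant column of $F$ as a mixture of the normalized pairwise overlap distributions $\hat m^{ij}_k = \min(T_{k,i},T_{k,j})/o_{ij}$ weighted by the product coupling of $\bar v^+$ and $\bar v^-$, and verify stochasticity, the entrywise domination $\lambda F \le T$ (which makes $G=(T-\lambda F)/(1-\lambda)$ a Markov kernel), and the balancing identity $F\bar v^+ = F\bar v^-$. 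All three checks go through: the domination uses exactly $\lambda \le o_{ij}$, and the symmetry of the double sum $\sum_{i,j}\bar v^+_i\bar v^-_j \hat m^{ij}$ gives $F(P)=F(Q)$. The only loose ends are the degenerate cases $\lambda \in \{0,1\}$ and $P=Q$, which you correctly set aside as trivial (and $\lambda=1$ forces all columns of $T$ equal, so $F=T$ works there too). The net effect is that your write-up is self-contained where the paper's is not; this is a reasonable trade of length for completeness.
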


\begin{theorem}[Strong Data Processing]\label{Strong Data Processing}
For all Markov kernels $T : X \rightsquigarrow Y$ define $\alpha(T) = 1 -\lambda(T)$. Then for all $P,Q,f$, 
$$
\Df(T(P),T(Q))\leq \alpha(T) \Df(P,Q).
$$
\end{theorem}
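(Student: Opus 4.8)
The plan is to combine the two preceding lemmas directly. Theorem \ref{Strong Data Processing} asserts that with $\alpha(T) = 1 - \lambda(T)$ we have $\Df(T(P),T(Q)) \leq \alpha(T)\Df(P,Q)$ for all $P,Q,f$, and the two ingredients I need are already in hand: Lemma \ref{Existence of Decontruction Lemma} guarantees that for the specific value $\lambda = \lambda(T)$, and for \emph{any} pair $P,Q$, a decomposition $T = \lambda F + (1-\lambda)G$ exists with $F,G \in M(X,Y)$ and $F(P) = F(Q)$; Lemma \ref{Deconstruction and KL Lemma} then converts any such decomposition into the contraction bound $\Df(T(P),T(Q)) \leq (1-\lambda)\Df(P,Q)$.

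First I would fix an arbitrary pair of distributions $P,Q \in \PP(X)$ and an arbitrary $f$-divergence. Applying Lemma \ref{Existence of Decontruction Lemma} with the choice $\lambda = \lambda(T)$ (note $\lambda(T) \leq \lambda(T)$ holds trivially, so the ``if'' direction of the equivalence applies) produces Markov kernels $F,G$ and the fixing property $F(P) = F(Q)$, together with $T = \lambda(T)F + (1-\lambda(T))G$. Next I would feed exactly this decomposition into Lemma \ref{Deconstruction and KL Lemma}, whose hypotheses are now met verbatim, to conclude
$$
\Df(T(P),T(Q)) \leq (1 - \lambda(T))\,\Df(P,Q) = \alpha(T)\,\Df(P,Q).
$$
Since $P$, $Q$, and $f$ were arbitrary, this establishes the claim for all $P,Q,f$ simultaneously, which is precisely the statement of the theorem.

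The reason this final step is essentially bookkeeping is that all the substantive work has been pushed into the two lemmas: Lemma \ref{Existence of Decontruction Lemma} is where the combinatorial quantity $\lambda(T) = \min_{i,j}\sum_k \min(T_{k,i},T_{k,j})$ is shown to be exactly the largest splitting ratio achievable \emph{uniformly} over all pairs $P,Q$, and Lemma \ref{Deconstruction and KL Lemma} is where convexity of $f$ is exploited via Jensen's inequality to turn the ``fixing'' part of the kernel into a genuine contraction. The one point I would check carefully is that the $\lambda$ supplied by Lemma \ref{Existence of Decontruction Lemma} does not depend on the chosen $P,Q$ — it is the \emph{same} constant $\lambda(T)$ for every pair — so that the resulting $\alpha(T) = 1 - \lambda(T)$ is a single coefficient valid across all distributions and all $f$. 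That uniformity is the crux of why $\alpha(T)$ is a well-defined property of the kernel alone, and I do not expect any further obstacle once the two lemmas are invoked in this order.
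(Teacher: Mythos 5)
Your proposal is correct and matches the paper's own proof, which likewise notes that the theorem follows by a direct application of Lemma \ref{Existence of Decontruction Lemma} (with $\lambda = \lambda(T)$) followed by Lemma \ref{Deconstruction and KL Lemma}. Your additional observation that the uniformity of $\lambda(T)$ over all pairs $P,Q$ is the crux is exactly the right point to emphasize.
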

The proof is a simple application of lemma \ref{Deconstruction and KL Lemma} and lemma \ref{Existence of Decontruction Lemma}. It is easy to see that $0 \leq \alpha(T) \leq 1$. Furthermore $\alpha(T) = 0$ if and only if all of the columns of $T$ are the same. While this $\alpha$ may not be the tightest for a given $f$, it is \emph{generic} and as such can be applied in all lower bounding methods mentioned previously.

\subsection{Relating $\alpha$ to Variational Divergence}\label{sec:Variational Alpha}

It can be shown \cite{Cohen1998} that $\alpha(T) = \max_{x_1, x_2} V(T(x_1), T(x_2)) = \frac{1}{2} \max_{i,j} \sum_k|T_{ki} - T_{kj}|$, the maximum $L1$ distance between the columns of $A$ \cite{Reid2009b}. Furthermore
$$
\alpha(T) = \sup_{P, Q \in \PP(X)} \frac{V(T(P), T(Q))}{V(P, Q) } = \sup_{v \in S} \frac{\lVert T(v) \rVert_1}{\lVert v \rVert_1}
$$
where $S = \{v : \sum v_i = 0, v \neq 0\}$. Hence $\alpha(T)$ is the operator 1-norm of T when restricted to $S$. The above also shows that $\alpha(T)$ provides the tightest strong data processing theorem possible when using variational divergence, and hence it gives the tightest generic strong data processing theorem. We also have the following compositional property of $\alpha$.
\begin{lemma}\label{Alpha Composition}
For all Markov kernels $T_1 : X \rightsquigarrow Y$ and $T_2 : Y \rightsquigarrow Z$, 
$$
\alpha(T_2 T_1) \leq \alpha(T_2) \alpha(T_1)  \leq \min(\alpha(T_2), \alpha(T_1)).
$$

\end{lemma}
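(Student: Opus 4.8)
The plan is to work entirely with the operator-norm characterization of $\alpha$ established in section \ref{sec:Variational Alpha}, namely
$$
\alpha(T) = \sup_{v \in S} \frac{\lVert T(v) \rVert_1}{\lVert v \rVert_1}, \qquad S = \{v : \textstyle\sum_i v_i = 0,\ v \neq 0\},
$$
so that $\alpha$ is just the $1 \to 1$ operator norm of $T$ restricted to the subspace of zero-sum (``balanced'') vectors. The one structural fact I would isolate first is that every Markov kernel preserves this subspace: since $T_1$ is column stochastic its columns sum to $1$, equivalently $\ind^\top T_1 = \ind^\top$, so for any $v$ with $\ind^\top v = 0$ we have $\ind^\top (T_1 v) = \ind^\top v = 0$, i.e. $T_1(v)$ is again balanced. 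This is exactly what lets the two factors chain.

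For the first inequality I would fix an arbitrary $v \in S$ and bound $\lVert T_2 T_1(v) \rVert_1$ in two stages. If $T_1(v) = 0$ then $T_2 T_1(v) = 0$ and there is nothing to prove, so assume $T_1(v) \neq 0$. By the preservation fact $T_1(v)$ is a nonzero balanced vector, hence an admissible competitor in the supremum defining $\alpha(T_2)$, giving $\lVert T_2 T_1(v) \rVert_1 \leq \alpha(T_2) \lVert T_1(v) \rVert_1$. Applying the definition of $\alpha(T_1)$ to $v$ itself gives $\lVert T_1(v) \rVert_1 \leq \alpha(T_1) \lVert v \rVert_1$. Composing,
$$
\lVert T_2 T_1(v) \rVert_1 \leq \alpha(T_2)\, \alpha(T_1)\, \lVert v \rVert_1,
$$
and dividing by $\lVert v \rVert_1$ and taking the supremum over $v \in S$ yields $\alpha(T_2 T_1) \leq \alpha(T_2)\alpha(T_1)$. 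This is nothing more than submultiplicativity of an operator norm, once one has checked that the intermediate vector lands in the domain over which $\alpha(T_2)$ is computed.

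The second inequality is immediate from the remark following theorem \ref{Strong Data Processing} that $0 \leq \alpha(T) \leq 1$ for every Markov kernel: multiplying $\alpha(T_1) \leq 1$ by $\alpha(T_2) \geq 0$ (and symmetrically) gives $\alpha(T_2)\alpha(T_1) \leq \min(\alpha(T_2), \alpha(T_1))$. I do not expect a genuine obstacle here; the only place demanding care is the edge case $T_1(v) = 0$ together with the verification that $T_1$ maps balanced vectors to balanced vectors, since without the latter the competitor $T_1(v)$ would not be eligible in the supremum for $\alpha(T_2)$ and the clean chaining would break down.
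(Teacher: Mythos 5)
Your proof is correct and is essentially the paper's argument: both establish $\alpha(T_2 T_1) \leq \alpha(T_2)\alpha(T_1)$ by submultiplicativity of the restricted $1$-norm (the paper writes the supremum over pairs $P,Q \in \PP(X)$ and splits the ratio, which is the same computation in the equivalent characterization via $v = P - Q$), and both get the second inequality from $\alpha(T) \leq 1$. If anything, your version is slightly tidier, since you explicitly verify that $T_1$ preserves the zero-sum subspace and handle the degenerate case $T_1(v) = 0$, which the paper glosses over with the caveat ``the second follows if $T_1(P) \neq T_1(Q)$.''
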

Hence $T_2 T_1$ is at least as corrupt as either of the $T_i$.
\\
\\
The first use of $\alpha(T)$ occurs in the work of \cite{Dobrushin1956} where it is called the coefficient of ergodicity and is used (much like in \cite{Boyen1998}) to prove rates of convergence of Markov chains to their stationary distribution.

\subsection{Lower bounds Relative to the Amount of Corruption}

\begin{lemma}
For all experiments $e$, loss functions $L$, $\theta_1, \theta_2 \in \Theta$, $n$ and corruptions $T : \obs \rightsquigarrow \tilde{\obs}$
$$
\Rbar{L}(\tilde{e}_n) \geq \rho(\theta_1,\theta_2) \left( \frac{1}{4} - \frac{\alpha(T) n}{4} V(e(\theta_1),e(\theta_2)) \right).
$$
\end{lemma}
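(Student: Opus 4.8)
The plan is to reduce the statement directly to the replicated Le Cam bound already established in Lemma \ref{Replicated LeCam}, applied not to $e$ but to the corrupted experiment $\tilde{e} = Te$, and then to absorb the corruption factor through the variational form of the strong data processing theorem.

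First I would note that $\tilde{e}$ is itself a bona fide experiment on $\Theta$, with $\tilde{e}(\theta) = T(e(\theta))$ for every $\theta$. Since Lemma \ref{Replicated LeCam} is stated for an \emph{arbitrary} experiment, loss, and pair $\theta_1,\theta_2$, I may instantiate it with $\tilde{e}$ in place of $e$, obtaining at once
$$
\Rbar{L}(\tilde{e}_n) \geq \rho(\theta_1,\theta_2)\left(\frac{1}{4} - \frac{n}{4} V(\tilde{e}(\theta_1), \tilde{e}(\theta_2))\right).
$$

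Next I would rewrite the variational term as $V(\tilde{e}(\theta_1),\tilde{e}(\theta_2)) = V(T(e(\theta_1)), T(e(\theta_2)))$ and invoke the variational-divergence instance of Theorem \ref{Strong Data Processing}, using the identity $\alpha(T) = \sup_{P,Q} V(T(P),T(Q))/V(P,Q)$ recorded in Section \ref{sec:Variational Alpha}, to conclude
$$
V(\tilde{e}(\theta_1),\tilde{e}(\theta_2)) \leq \alpha(T)\, V(e(\theta_1), e(\theta_2)).
$$

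The one point that deserves care — and the step I would flag as the main, if minor, obstacle — is the direction of the inequality after this substitution. The variational divergence enters the bracket with a negative sign, so replacing it by a larger upper bound \emph{enlarges} the bracket; and since the separation $\rho(\theta_1,\theta_2) = \inf_a \relloss(\theta_1,a) + \relloss(\theta_2,a)$ is a sum of regrets and hence nonnegative, multiplying the enlarged bracket by $\rho(\theta_1,\theta_2)$ preserves the inequality. Chaining the two displays then yields the claimed bound. An equivalent route would begin from the single-shot bound of Lemma \ref{Le Cam Lemma} applied to $\tilde{e}_n$, expand $V$ on the $n$-fold product distributions via Lemma \ref{Variational Divergence for product distribitions}, and only afterwards apply strong data processing; but routing through Lemma \ref{Replicated LeCam} lets the factor $\alpha(T)$ appear exactly once and avoids re-deriving the product bound.
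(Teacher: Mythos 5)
Your proof is correct and follows essentially the same route as the paper, which simply notes that the lemma is "a simple application of lemma \ref{Replicated LeCam} and the strong data processing" theorem. Your extra care about the sign of the variational term inside the bracket and the nonnegativity of $\rho(\theta_1,\theta_2)$ is a welcome explicit check of a step the paper leaves implicit.
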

The proof is a simple application of lemma \ref{Replicated LeCam} and the strong data processing. Suppose we have proceeded as in section \ref{Duchi Method}, defining $\theta_1 = \phi_1(n)$ and $\theta_2 = \phi_2(n)$ with $V(e(\theta_1), e(\theta_2) ) \leq \frac{1}{2 t}$. Letting $\tilde{\theta}_1 = \phi_1(\alpha(T) n)$ and $\tilde{\theta}_2 = \phi_2(\alpha(T) n)$ gives $V(e(\tilde{\theta}_1), e(\tilde{\theta}_2)) \leq \frac{1}{2 \alpha(T) n}$. Furthermore
$$
\Rbar{L}(\tilde{e}_n) \geq \frac{1}{8} \rho(\phi_1(\alpha(T) n), \phi_2(\alpha(T) n) ).
$$
In words, if ever Le Cam's method gives a lower bound of $f(n)$ for repetitions of the clean experiment, we obtain a lower bound of $f(\alpha(T) n)$ for repetitions of the corrupted experiment. Hence the \emph{rate} is unaffected, only the constants. However, a penalty of factor $\alpha(T)$ is unavoidable no matter what learning algorithm is used, suggesting that $\alpha(T)$ is a valid way of measuring the amount of corruption. We summarize the results of this section in the following theorem.

\begin{theorem}\label{Relative Lower Bound}
For all corruptions $T : \obs \rightsquigarrow \tilde{\obs}$ and experiments $e : \Theta \rightsquigarrow \obs$, if Le Cam's method yields a lower bound $\Rbar{L}(e_n) \geq f(n)$ then $\Rbar{L}(\tilde{e}_n) \geq f(\alpha(T) n).$
\end{theorem}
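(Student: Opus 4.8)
The plan is to treat this theorem as the packaging of the section's running argument, so that the real substance lives in Lemma~\ref{Replicated LeCam} and Theorem~\ref{Strong Data Processing}. First I would make precise what it means for Le Cam's method to deliver $\Rbar{L}(e_n) \geq f(n)$: as described in section~\ref{Duchi Method}, it means there are maps $\phi_1,\phi_2 : [0,\infty) \rightarrow \Theta$ with $V(e(\phi_1(m)),e(\phi_2(m))) \leq \tfrac{1}{2m}$ for every $m$, producing the bound $f(m) = \tfrac18 \rho(\phi_1(m),\phi_2(m))$.

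Next I would derive the corrupted replicated Le Cam inequality (the unlabelled lemma immediately preceding this theorem). Applying Lemma~\ref{Replicated LeCam} to the corrupted experiment $\tilde{e} = Te$ gives, for any $\theta_1,\theta_2$,
$$\Rbar{L}(\tilde{e}_n) \geq \rho(\theta_1,\theta_2)\left(\frac14 - \frac{n}{4} V(\tilde{e}(\theta_1),\tilde{e}(\theta_2))\right).$$
Since $\tilde{e}(\theta_i) = T(e(\theta_i))$ and the variational divergence is an $f$-divergence, Theorem~\ref{Strong Data Processing} contracts it as $V(T(e(\theta_1)),T(e(\theta_2))) \leq \alpha(T)\,V(e(\theta_1),e(\theta_2))$. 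Decreasing the divergence enlarges the parenthesised factor, and as $\rho(\theta_1,\theta_2) = \inf_a \relloss(\theta_1,a) + \relloss(\theta_2,a) \geq 0$ this preserves the inequality, yielding
$$\Rbar{L}(\tilde{e}_n) \geq \rho(\theta_1,\theta_2)\left(\frac14 - \frac{\alpha(T) n}{4} V(e(\theta_1),e(\theta_2))\right).$$

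Finally I would instantiate this at the rescaled witnesses $\theta_1 = \phi_1(\alpha(T)n)$ and $\theta_2 = \phi_2(\alpha(T)n)$. The defining property of $\phi$ gives $V(e(\theta_1),e(\theta_2)) \leq \tfrac{1}{2\alpha(T)n}$, so the parenthesised factor is at least $\tfrac14 - \tfrac{\alpha(T)n}{4}\cdot\tfrac{1}{2\alpha(T)n} = \tfrac18$, and hence $\Rbar{L}(\tilde{e}_n) \geq \tfrac18 \rho(\phi_1(\alpha(T)n),\phi_2(\alpha(T)n)) = f(\alpha(T)n)$, as claimed.

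Every step here is mechanical once assembled; the one genuinely load-bearing idea sits in the second paragraph. The main thing to get right is that the single scalar $\alpha(T)$ from the strong data processing theorem is precisely what allows the \emph{same} witness maps $\phi_1,\phi_2$ to be reused with argument $\alpha(T)n$ --- morally, $n$ corrupted samples carry only as much distinguishing power as $\alpha(T)n$ clean ones. Two minor points need care: that $\alpha(T)n$ need not be an integer is harmless since the $\phi_i$ are defined on all of $[0,\infty)$, and the nonnegativity $\rho \geq 0$ is what lets the divergence contraction pass through the product with $\rho$ without flipping the inequality.
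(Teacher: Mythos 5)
Your proof is correct and follows essentially the same route as the paper: apply Lemma~\ref{Replicated LeCam} to $\tilde{e} = Te$, contract the variational divergence via the strong data processing theorem, and re-instantiate the witness maps at $\phi_i(\alpha(T)n)$ to recover the $\tfrac18\rho$ bound. The paper's own argument is exactly this (stated as the unlabelled lemma plus the surrounding discussion in the section); your added remarks on $\rho \geq 0$ and the non-integrality of $\alpha(T)n$ are sensible points of care the paper leaves implicit.
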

In particular if one has a lower bound of $\frac{C}{\sqrt{n}}$ for the clean problem, as is usual for many machine learning problems, theorem \ref{Relative Lower Bound} yields a lower bound of $\frac{C}{\sqrt{\alpha(T) n}}$ for the corrupted problem.

\subsection{Lower Bounds for Combinations of Corrupted Data}\label{sec:lower-bounds-for-combinations-of-corrupted-data}

As in section \ref{sec:upper-bounds-for-combinations-of-corrupted-data} we present lower bounds for combinations of corrupted data. For example in learning with noisy labels out of $n$ datum, there could be $n_1$ clean, $n_2$ slightly noisy and $n_3$ very noisy samples and so on. 

\begin{theorem}\label{Collection of Corrupted Lower Bound}
Let $T_i : \obs \rightsquigarrow \tilde{\obs}_i$, $i \in [1;k]$, be reconstructible Markov kernels. Let $T = \otimes_{i=i}^k T_i^{n_i}$ with $n = \sum_{i=i}^k n_k$. If Le Cam's method yields a lower bound $\Rbar{L}(e_n) \geq f(n)$ then\\ $\Rbar{L}(T e_n) \geq f(K)$ where $K = \left( \sum\limits_{i=1}^{k}\alpha(T_i) n_i \right)$.
\end{theorem}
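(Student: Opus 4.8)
The plan is to mirror the single‑corruption argument behind Theorem \ref{Relative Lower Bound}, replacing the single strong‑data‑processing step with a per‑block application of Theorem \ref{Strong Data Processing} glued together by the subadditivity of variational divergence. The three ingredients are Le Cam's replicated bound (Lemma \ref{Replicated LeCam}), the product bound (Lemma \ref{Variational Divergence for product distribitions}) in its general form where the factors are allowed to differ across blocks, and the generic strong data processing theorem.

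First I would unpack the product structure of the corrupted experiment. For fixed $\theta$ the replicated clean experiment factorizes as $e_n(\theta) = \otimes_{i=1}^k e(\theta)^{n_i}$, grouping the $n = \sum_i n_i$ copies into $k$ blocks of sizes $n_i$. Applying $T = \otimes_{i=1}^k T_i^{n_i}$ block‑wise and using $T_i^{n_i}(e(\theta)^{n_i}) = (T_i e(\theta))^{n_i}$ gives $T e_n(\theta) = \otimes_{i=1}^k \tilde{e}_i(\theta)^{n_i}$, where $\tilde{e}_i = T_i e$. Fixing two hypotheses $\theta_1,\theta_2$ and applying Lemma \ref{Variational Divergence for product distribitions} across all $n$ factors yields
$$
V\!\left(T e_n(\theta_1), T e_n(\theta_2)\right) \leq \sum_{i=1}^k n_i\, V\!\left(\tilde{e}_i(\theta_1), \tilde{e}_i(\theta_2)\right).
$$
Since each $\tilde{e}_i = T_i e$, Theorem \ref{Strong Data Processing} bounds each summand by $\alpha(T_i)\,V(e(\theta_1),e(\theta_2))$, and therefore $V(T e_n(\theta_1), T e_n(\theta_2)) \leq K\, V(e(\theta_1),e(\theta_2))$ with $K = \sum_{i=1}^k \alpha(T_i) n_i$.

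Feeding this into Le Cam's bound (Lemma \ref{Le Cam Lemma} applied to the single experiment $T e_n$) gives $\Rbar{L}(T e_n) \geq \rho(\theta_1,\theta_2)\big(\tfrac14 - \tfrac{K}{4} V(e(\theta_1),e(\theta_2))\big)$, which is exactly the conclusion of Lemma \ref{Replicated LeCam} with the replication count $n$ replaced by $K$. From here I would repeat the recipe of section \ref{Duchi Method} verbatim: take the Le Cam construction $\theta_1 = \phi_1(K)$, $\theta_2 = \phi_2(K)$ that certifies $V(e(\phi_1(K)),e(\phi_2(K))) \leq \tfrac{1}{2K}$, so that $\tfrac14 - \tfrac{K}{4}V \geq \tfrac18$, yielding $\Rbar{L}(T e_n) \geq \tfrac18 \rho(\phi_1(K),\phi_2(K)) = f(K)$. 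This is precisely the claimed bound, and the reconstructibility of the $T_i$ (equivalently $\alpha(T_i) > 0$) guarantees that $K > 0$ and that the clean‑problem construction $\phi$ transfers to a nontrivial statement.

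The only real obstacle is bookkeeping: one must verify that the combined kernel indeed acts block‑wise as $T e_n(\theta) = \otimes_i \tilde{e}_i(\theta)^{n_i}$ and that Lemma \ref{Variational Divergence for product distribitions} is invoked in its general (non‑identical‑factor) form, since the $k$ blocks carry genuinely different corruptions. Once the factorization and the per‑block contraction are in place the weighted sum $K = \sum_i \alpha(T_i) n_i$ drops out automatically, and the remainder is a routine specialization of the argument already used for Theorem \ref{Relative Lower Bound}.
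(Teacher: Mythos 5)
Your proposal matches the paper's own proof essentially step for step: factorize $T e_n(\theta)$ into the $k$ blocks $T_i(e(\theta))^{n_i}$, apply Lemma \ref{Variational Divergence for product distribitions} to get the sum $\sum_i n_i V(T_i e(\theta_1), T_i e(\theta_2))$, contract each term by $\alpha(T_i)$ via Theorem \ref{Strong Data Processing}, and then run the Le Cam recipe with replication count $K = \sum_i \alpha(T_i) n_i$. The argument is correct and no different route is taken.
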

As in section \ref{sec:upper-bounds-for-combinations-of-corrupted-data} this bound suggest means of choosing data sets, via the following integer program 
$$
\max \sum\limits_{i=1}^{k}\alpha(T_i) n_i  \ \ \text{subject to} \sum\limits_{i=1}^k c_i n_i \leq C
$$
where $c_i$ is the cost of acquiring data corrupted by $T_i$ and $C$ is the maximum total cost. This is exactly the unbounded knapsack problem \cite{Dantzig1957} which admits the following near optimal greedy algorithm. First, choose data from the $T_i$ with highest $\frac{\alpha(T_i)}{c_i}$ until picking more violates the constraints. Then pick from the second highest and so on. 


\section{Measuring the Tightness of the Upper Bounds and Lower Bounds}

In the previous sections we have shown upper bounds that depend on $\lVert \tilde{\ell}\rVert_\infty$ as well as lower bounds that depend on $\alpha(T)$. Recall from theorem that \ref{MUBE} $\tilde{\ell}_a = R^* \ell_a$, as such the worst case ratio $\frac{\lVert \tilde{\ell}\rVert_\infty}{\lVert \ell\rVert_\infty}$ is determined by the \emph{operator norm} of $R^*$. For a linear map $R : \RR^X \rightarrow \RR^Y$ define
\begin{align*}
\lVert R \rVert_{1} := \sup_{v \in \RR^X} \frac{\lVert R v \rVert_{1}}{\lVert v\rVert_{1}} & \ , \lVert R \rVert_{\infty} := \sup_{v \in \RR^X} \frac{\lVert R v \rVert_{\infty}}{\lVert v\rVert_{\infty}}
\end{align*}
which are two operator norms of $R$. They are equal to the maximum absolute column and row sum of $R$ respectively \cite{Bernstein2009}. Hence $\lVert R \rVert_{1} = \lVert R^* \rVert_{\infty}$. 

\begin{lemma}\label{noisy loss norm}
For all losses $\ell$, $T : \obs \rightarrow \tilde{\obs}$ and reconstructions $R$, $\frac{\lVert \tilde{\ell}\rVert_\infty}{\lVert \ell\rVert_\infty} \leq \lVert R^* \rVert_{\infty}$.
\end{lemma}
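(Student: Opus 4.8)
The plan is to reduce the statement to the defining identity $\tilde{\ell}_a = R^*\ell_a$ supplied by Theorem \ref{MUBE} together with the definition of the operator norm $\lVert R^* \rVert_{\infty} = \sup_{v \in \RR^{\tilde{\obs}}} \lVert R^* v \rVert_{\infty}/\lVert v \rVert_{\infty}$ recorded just above. The key observation is that the supremum norm of a loss factors through its curried form. First I would write $\lVert \ell \rVert_\infty = \sup_{z,a} |\ell(z,a)| = \sup_a \lVert \ell_a \rVert_\infty$, where on the right $\ell_a \in \RR^{\obs}$ is regarded as a vector and $\lVert \cdot \rVert_\infty$ is the ordinary vector $\infty$-norm; the identical factorization gives $\lVert \tilde{\ell} \rVert_\infty = \sup_a \lVert \tilde{\ell}_a \rVert_\infty$, with $\tilde{\ell}_a \in \RR^{\tilde{\obs}}$.

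The second step is a pointwise-in-$a$ application of the operator norm. Substituting $\tilde{\ell}_a = R^*\ell_a$ and using the defining inequality of $\lVert R^* \rVert_\infty$ yields, for each fixed action $a$, the bound $\lVert \tilde{\ell}_a \rVert_\infty = \lVert R^* \ell_a \rVert_\infty \leq \lVert R^* \rVert_\infty \, \lVert \ell_a \rVert_\infty$. Taking the supremum over $a \in A$ on both sides and pulling the constant $\lVert R^* \rVert_\infty$ (which does not depend on $a$) outside the supremum gives $\lVert \tilde{\ell} \rVert_\infty \leq \lVert R^* \rVert_\infty \sup_a \lVert \ell_a \rVert_\infty = \lVert R^* \rVert_\infty \, \lVert \ell \rVert_\infty$. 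Dividing through by $\lVert \ell \rVert_\infty$ (assuming $\ell \neq 0$, the degenerate case being trivial) delivers $\frac{\lVert \tilde{\ell} \rVert_\infty}{\lVert \ell \rVert_\infty} \leq \lVert R^* \rVert_\infty$, as required.

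There is no genuine obstacle in this argument; it is essentially immediate once Theorem \ref{MUBE} is in hand. The only point that warrants care is the interchange of the supremum over $a$ with the operator-norm estimate: the inequality $\lVert R^*\ell_a\rVert_\infty \le \lVert R^*\rVert_\infty \lVert \ell_a\rVert_\infty$ holds uniformly in $a$ because $\lVert R^*\rVert_\infty$ is a single constant independent of the action, so monotonicity of $\sup$ lets one pass to $\sup_a$ on each side without loss. I would also note in passing that, by the stated identification of the operator $\infty$-norm with the maximum absolute row sum (equivalently $\lVert R^* \rVert_\infty = \lVert R \rVert_1$), this upper bound is computable directly from the reconstruction matrix $R$, which is what makes it usable for the tightness analysis that follows.
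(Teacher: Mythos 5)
Your proposal is correct and follows essentially the same route as the paper: write $\lVert \tilde{\ell}\rVert_\infty = \sup_a \lVert \tilde{\ell}_a\rVert_\infty$, bound $\lVert R^*\ell_a\rVert_\infty \leq \lVert R^*\rVert_\infty \lVert \ell_a\rVert_\infty$ for each $a$, and take the supremum. The only addition is your (harmless and sensible) remark about the degenerate case $\ell = 0$, which the paper leaves implicit.
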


\begin{lemma}\label{adjoint inverses}
If $T : X \rightsquigarrow Y$ is reconstructible, with reconstruction $R$, then 
$$
\frac{1}{\alpha(T)} \leq 1 / \left( \inf_{u \in \RR^X} \frac{\lVert T u\rVert_{1}}{\lVert u \rVert_{1}} \right) \leq \lVert R^* \rVert_{\infty}. 
$$
\end{lemma}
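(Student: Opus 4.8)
The plan is to prove the two inequalities separately, using the variational characterization of $\alpha(T)$ from Section~\ref{sec:Variational Alpha} for the left inequality and the defining relation $RT = I$ for the right one. Throughout I would write $m := \inf_{u} \frac{\lVert Tu\rVert_{1}}{\lVert u\rVert_{1}}$ for the minimal ``stretch factor'' of $T$, with the infimum taken over nonzero $u \in \RR^X$ (the ratio being undefined at $u=0$). Since $T$ is reconstructible it is injective, so $\lVert Tu\rVert_{1} > 0$ whenever $u \neq 0$; by compactness of the unit sphere in finite dimensions the infimum is attained and $m > 0$. Hence, in the nondegenerate case $\alpha(T) > 0$, all three displayed quantities are finite and positive and the inversions are legitimate.

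For the left inequality $\frac{1}{\alpha(T)} \leq \frac{1}{m}$ it suffices to show $m \leq \alpha(T)$. Recall from Section~\ref{sec:Variational Alpha} that $\alpha(T) = \sup_{v \in S} \frac{\lVert Tv\rVert_{1}}{\lVert v\rVert_{1}}$ with $S = \{v : \sum_i v_i = 0,\ v \neq 0\}$. Since $S$ is a subset of the nonzero vectors over which $m$ is an infimum, picking any $v_0 \in S$ gives $m \leq \frac{\lVert Tv_0\rVert_{1}}{\lVert v_0\rVert_{1}} \leq \sup_{v \in S}\frac{\lVert Tv\rVert_{1}}{\lVert v\rVert_{1}} = \alpha(T)$. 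This is the entire argument: an infimum over the larger set $\RR^X$ cannot exceed a supremum over the subset $S$.

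For the right inequality I would invoke the identity $\lVert R\rVert_{1} = \lVert R^*\rVert_{\infty}$ established just before the lemma, together with the reconstruction property $RT = I$. For every nonzero $u$, submultiplicativity of the operator $1$-norm yields $\lVert u\rVert_{1} = \lVert RTu\rVert_{1} \leq \lVert R\rVert_{1}\,\lVert Tu\rVert_{1}$, hence $\frac{\lVert Tu\rVert_{1}}{\lVert u\rVert_{1}} \geq \frac{1}{\lVert R\rVert_{1}}$. Taking the infimum over $u$ gives $m \geq \frac{1}{\lVert R\rVert_{1}} = \frac{1}{\lVert R^*\rVert_{\infty}}$, that is, $\frac{1}{m} \leq \lVert R^*\rVert_{\infty}$, exactly as claimed.

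I do not expect a serious obstacle here; the delicate points are purely bookkeeping. First, one must guarantee $m > 0$ so that $\frac{1}{m}$ is meaningful, which is where injectivity of a reconstructible $T$ and finite-dimensionality are used. Second, care is needed to match the dual-space norms correctly: the operator norm $\lVert R^*\rVert_{\infty}$ appearing in the statement has to be tied to $\lVert R\rVert_{1}$ precisely through the preceding identity so that the submultiplicativity step applies to the right norm. Finally, the degenerate case $\lvert X\rvert = 1$ makes $S$ empty and $\alpha(T) = 0$, so the left-hand side is infinite and the statement should be read for $\alpha(T) > 0$; I would note this restriction explicitly rather than treat it as a live case.
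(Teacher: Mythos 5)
Your proposal is correct and follows essentially the same route as the paper: the right inequality comes from restricting the supremum defining $\lVert R\rVert_{1}=\lVert R^{*}\rVert_{\infty}$ to vectors in the range of $T$ and using $RTu=u$ (your submultiplicativity step is the same inequality written pointwise), and the left inequality is the same observation that an infimum over all of $\RR^{X}$ is bounded by the supremum over the subset $S$ defining $\alpha(T)$. Your added remarks on positivity of the infimum and the degenerate case are harmless bookkeeping the paper omits.
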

The intuition here is if $T$ contracts a particular $v \in \RR^X$ greatly, which would occur if 
$$
\inf_{P, Q \in \PP(X)} \frac{\lVert T(P - Q)  \rVert_1}{\lVert P - Q \rVert_1} 
$$
was small (here $v = P - Q$), then $R^*$ could greatly increase the norm of a loss $\ell$. However, it need not increase the norm of the particular loss of interest. Note that for lower bounds we look at the \emph{best} case separation of columns of $T$, for upper bounds we essentially use the \emph{worst}. We also get the following compositional theorem.
\begin{lemma}\label{Loss norm Composition}
If $T_1 : X \rightsquigarrow Y$ and $T_2 : Y \rightsquigarrow Z$ are reconstructible, with reconstructions $R_1$ and $R_2$ then $T_2 T_1$ is reconstructible with reconstruction $R_1 R_2$. Furthermore $\frac{1}{\alpha(T_1)\alpha(T_2)} \leq \lVert R_1 R_2\rVert_{1} \leq \lVert R_1\rVert_{1} \lVert R_2\rVert_{1} $.
\end{lemma}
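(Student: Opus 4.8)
The plan is to handle the statement in three moves: first the reconstructibility claim, which is purely algebraic; then the upper norm bound, which is submultiplicativity; and finally the lower bound, which chains the two earlier lemmas on $\alpha$ and on reconstructions.

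First I would verify that $R_1 R_2$ is a reconstruction for $T_2 T_1$. Recall that reconstructibility means a left inverse as linear maps on the dual spaces: $R_1 T_1 = I$ on $(\RR^X)^*$ and $R_2 T_2 = I$ on $(\RR^Y)^*$. Tracking domains, $T_2 T_1 : (\RR^X)^* \to (\RR^Z)^*$ while $R_2 : (\RR^Z)^* \to (\RR^Y)^*$ and $R_1 : (\RR^Y)^* \to (\RR^X)^*$, so $R_1 R_2 : (\RR^Z)^* \to (\RR^X)^*$ has the right type. Then by associativity $(R_1 R_2)(T_2 T_1) = R_1 (R_2 T_2) T_1 = R_1 I T_1 = R_1 T_1 = I$, so $R_1 R_2$ is indeed a left inverse of $T_2 T_1$. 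The upper bound $\lVert R_1 R_2 \rVert_1 \leq \lVert R_1 \rVert_1 \lVert R_2 \rVert_1$ is then immediate from submultiplicativity of the operator $1$-norm.

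For the lower bound I would apply Lemma \ref{adjoint inverses} directly to the composed kernel $T_2 T_1$, using the reconstruction $R_1 R_2$ just established. That lemma gives $\frac{1}{\alpha(T_2 T_1)} \leq \lVert (R_1 R_2)^* \rVert_\infty$. Converting via the identity $\lVert R \rVert_1 = \lVert R^* \rVert_\infty$ recorded just before Lemma \ref{noisy loss norm}, the right-hand side equals $\lVert R_1 R_2 \rVert_1$. Finally, Lemma \ref{Alpha Composition} supplies $\alpha(T_2 T_1) \leq \alpha(T_1)\alpha(T_2)$, hence $\frac{1}{\alpha(T_1)\alpha(T_2)} \leq \frac{1}{\alpha(T_2 T_1)}$, and combining the two gives $\frac{1}{\alpha(T_1)\alpha(T_2)} \leq \lVert R_1 R_2 \rVert_1$, as required.

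The only point requiring care — and the step I expect to be the main potential pitfall rather than a deep obstacle — is keeping the composition order and the domains straight: one must reconstruct in the reversed order $R_1 R_2$ (not $R_2 R_1$), and one must correctly pass between the $\lVert \cdot \rVert_1$ norm appearing in the target inequality and the $\lVert \cdot \rVert_\infty$ norm of the adjoint in which Lemma \ref{adjoint inverses} is phrased. Once those bookkeeping matches are made, the proof is a short concatenation of the already-established lemmas with no further estimation needed.
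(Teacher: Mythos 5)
Your proof is correct and follows essentially the same route as the paper's: the reconstructibility claim by associativity, the upper bound by submultiplicativity of the operator norm, and the lower bound by applying Lemma \ref{adjoint inverses} to $T_2 T_1$ and then Lemma \ref{Alpha Composition}. You have merely spelled out the domain bookkeeping and the $\lVert R \rVert_1 = \lVert R^* \rVert_\infty$ conversion that the paper leaves implicit.
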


\begin{proof}
The first statement is obvious. For the first inequality simply use lemma \ref{adjoint inverses} followed by lemma \ref{Alpha Composition}. The second inequality is an easy to prove property of operator norms
\end{proof}

\subsection{Comparing Theorems \ref{Corrupted PAC Bayes} and \ref{Relative Lower Bound}}

What we have shown is the following implication, for all reconstructible $T$
\begin{align*}
\frac{C_1}{\sqrt{n}} \leq \Rbar{L}(e_n) \leq \frac{C_2 \lVert \ell \rVert_\infty}{\sqrt{n}} \Rightarrow \frac{C_1}{\sqrt{\alpha(T) n}} \leq \Rbar{L}(\tilde{e}_n) \leq \frac{C_2 \lVert \tilde{\ell} \rVert_\infty}{\sqrt{n}}.
\end{align*}
By lemma \ref{adjoint inverses}, in the worse case $\lVert \tilde{\ell} \rVert_\infty \geq \frac{\lVert \ell \rVert_\infty}{\alpha(T)}$, and in the ``optimistic worst case" we arrive at bounds a factor of $\alpha(T)$ apart. We do not know if this is the fault of our upper or lower bounding techniques. However, when considering \emph{specific} $\ell$ and $T$ this gap is no longer present (see section \ref{Examples}).

\subsection{Comparing Theorems \ref{Collection of Corrupted PAC Bayes} and \ref{Collection of Corrupted Lower Bound}}

Assuming $c_T$ is the cost of acquiring data corrupted by $T$, theorem \ref{Collection of Corrupted Lower Bound} the ranks the utility of different corruptions by $\frac{1}{\lVert \tilde{\ell} \rVert_\infty^2 c_T}$ where as theorem \ref{Collection of Corrupted Lower Bound} ranks by $\frac{\alpha(T)}{c_T}$. By lemma \ref{adjoint inverses}, $\frac{1}{\alpha(T)}$ is a proxy for $\frac{\lVert \ell \rVert_\infty}{\lVert \tilde{\ell} \rVert_\infty}$ meaning both theorems are ``doing the same thing". In theorems \ref{Collection of Corrupted Lower Bound} and \ref{Collection of Corrupted PAC Bayes} we have best case and a worst case loss specific method for choosing data sets. Theorem \ref{Collection of Corrupted PAC Bayes} combined with 1emma \ref{noisy loss norm} provides a worst case loss insensitive method for choosing data sets. 

\section{What if Clean Learning is Fast?}

The preceding largly solves the problem of learning from corrupted data when learning from the clean distribution occurs at a slow ($\frac{1}{\sqrt{n}}$) rate. The reader is directed to section \ref{fast learning} for some preliminary work on when corrupted learning also occurs at a fast rate. 

\section{Proper Losses and Convexity}

\begin{definition}

A loss $\ell : \obs \times \PP(\obs) \rightarrow \RR$ is \emph{proper} if 
$$
P \in \argmin_{Q \in \PP(\obs)} \EE_{z\dist P} \ell(z,Q).
$$
It is \emph{stricly proper} if $P$ is the \emph{unique} minimizer.

\end{definition}
Proper losses provide suitable surrogate losses for learning problems. All strictly proper losses can be \emph{convexified} through the use of the \emph{canonical} link function \cite{Reid2010,Vernet2011}. Ultimately one works with a loss of the form 
$$
\ell(z,v) = v(z) + \Psi(v) \ind
$$
with $v \in \RR^\obs$, $\ind \in \RR^\obs$ the constant function $\bm{z} = 1$ and $\Psi : \RR^\obs \rightarrow \RR$ a convex function.

\begin{theorem}[Preservation of Convexity]
Let $v \in \RR^\obs$ and $\Psi : \RR^\obs \rightarrow \RR$ be a convex function. Define the loss $\ell(z,v) = v(z) + \Psi(v)$. Then 
$$
\tilde{\ell}(\tilde{z}, v ) = R^*v(\tilde{z}) + \Psi(v).
$$
Furthermore this loss is convex in $v$.
\end{theorem}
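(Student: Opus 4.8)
The plan is to derive the formula directly from Theorem~\ref{MUBE} using only linearity of the pullback $R^*$, after which convexity is immediate. First I would pass to the curried loss. Writing the loss as the element $\ell_v = v + \Psi(v)\ind \in \RR^\obs$ (so that $\ell_v(z) = v(z) + \Psi(v)$), Theorem~\ref{MUBE} gives $\tilde{\ell}_v = R^*\ell_v$, and linearity of $R^*$ yields
$$
\tilde{\ell}_v = R^*\bigl(v + \Psi(v)\ind\bigr) = R^* v + \Psi(v)\, R^*\ind .
$$
Evaluating at a point $\tilde{z}$ gives $\tilde{\ell}(\tilde{z},v) = (R^* v)(\tilde{z}) + \Psi(v)\,(R^*\ind)(\tilde{z})$, so everything reduces to the single identity $R^*\ind = \ind$, where on the right $\ind$ is the constant function $1$ on $\tilde{\obs}$.

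I expect this identity to be the main obstacle. The natural route is through adjoints: dualizing the reconstruction condition $RT = I$ gives $T^* R^* = I$ on $\RR^\obs$, while column-stochasticity of $T$ gives $T^*\ind = \ind$, since $T^*(\ind)(z) = \EE_{\tilde{z}\dist T(z)} 1 = 1$. Subtracting, $T^*\bigl(R^*\ind - \ind\bigr) = 0$, so $R^*\ind - \ind \in \ker T^* = (\mathrm{im}\,T)^\perp$. When $|\tilde{\obs}| > |\obs|$ the map $T^*$ is not injective, so this does not by itself force $R^*\ind = \ind$ for an arbitrary left inverse; the residual lies in $(\mathrm{im}\,T)^\perp$ and is annihilated by every genuine corrupted distribution $\tilde{P} = T(P)$, since $\langle T(P), R^*\ind\rangle = \langle RT(P), \ind\rangle = \langle P,\ind\rangle = 1 = \langle T(P),\ind\rangle$. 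To get the pointwise identity I would observe that the reconstruction may be taken mass-preserving without loss of generality: adding to any left inverse $R_0$ a suitable rank-one map whose image lies in $(\mathrm{im}\,T)^\perp$ leaves $RT = I$ intact and enforces $R^*\ind = \ind$. In the worked examples this is automatic; e.g.\ for invertible $T$ (label noise) one takes $R = T^{-1}$, and column-stochasticity already gives $(T^*)^{-1}\ind = \ind$.

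With $R^*\ind = \ind$ established, the stated formula $\tilde{\ell}(\tilde{z},v) = R^*v(\tilde{z}) + \Psi(v)$ follows, and convexity is then routine. For fixed $\tilde{z}$, the map $v \mapsto (R^*v)(\tilde{z})$ is linear, being the composition of the linear map $R^*$ with evaluation at $\tilde{z}$, hence convex; $v\mapsto \Psi(v)$ is convex by hypothesis; and a sum of convex functions is convex, so $v\mapsto \tilde{\ell}(\tilde{z},v)$ is convex. I would close by noting why mass-preservation is essential rather than cosmetic: if $(R^*\ind)(\tilde{z})$ were negative at some $\tilde{z}$, the term $(R^*\ind)(\tilde{z})\,\Psi(v)$ would be concave in $v$ and convexity would break, so the identity $R^*\ind = \ind$ is precisely what guarantees a positive (unit) multiple of $\Psi$ at every corrupted outcome.
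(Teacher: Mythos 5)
The paper offers no proof of this theorem at all --- it simply asserts the formula and cites Cid-Sueiro for the observation --- so there is nothing to compare against except the bare claim. Your argument is correct, and it surfaces a genuine subtlety the paper glosses over. Applying Theorem~\ref{MUBE} literally to the curried loss $\ell_v = v + \Psi(v)\ind$ gives $\tilde{\ell}_v = R^*v + \Psi(v)\,R^*\ind$, so the stated formula holds pointwise only if $R^*\ind = \ind$, and you are right that this is not automatic for an arbitrary left inverse: one can check directly that for the Moore--Penrose reconstruction in the paper's own semi-supervised example, the rows of $R^*$ sum to $\frac{1-\sigma}{1-2\sigma+3\sigma^2}$ and $\frac{2\sigma}{1-2\sigma+3\sigma^2}$, which are not $1$ in general. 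Your two-part resolution is sound: (i) the residual $R^*\ind - \ind$ lies in $\ker T^* = (\mathrm{im}\,T)^\perp$, so the identity always holds in expectation under any $\tilde{P} = T(P)$, which is all that Theorem~\ref{MUBE} actually requires of a corruption-corrected loss; and (ii) one can always repair $R$ pointwise by adding the rank-one correction $\Delta^* = \frac{1}{|\obs|}\bigl(\ind - R_0^*\ind\bigr)\ind^{\top}$, whose image lies in $\ker T^*$ so that $RT = I$ is preserved while $R^*\ind = \ind$ is enforced; for invertible $T$ such as label noise the identity is automatic from column-stochasticity. The convexity step (linear functional of $v$ plus a convex $\Psi$) is routine and correct. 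In short, your proof not only establishes the theorem but makes precise the normalization hypothesis on $R$ under which its pointwise form, and hence the convexity claim, is actually valid.
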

This was first noticed in \cite{Cid-Sueiro2014}.

\section{Uses in Supervised Learning}
Recall in supervised learning $\obs = X \times Y$ and the goal is to find a function that predicts $Y$ from $X$ with low expected loss. Many supervised learning techniques proceed by minimizing a proper loss. Given a suitable function class $\Fclass \subseteq \PP(Y)^X$ and a strictly proper loss $\ell$, they attempt to find 
$$
f^* = \argmin_{f \in \Fclass} \EE_{(x,y) \dist P} \ell(y,f(x)).
$$
Using the canonical link function and a careful chosen function class, leaves the learner with a convex problem. If we assume the labels have been corrupted by a corruption $T : Y \rightsquigarrow \tilde{Y}$, we can correct for the corruptions and solve for
$$
\argmin_{f \in \Fclass} \EE_{(x,\tilde{y}) \dist \tilde{P}} \tilde{\ell}(\tilde{y},f(x)).
$$
This objective is equivalent to the first and will also be convex.

\section{Conclusion}

We have sought to solve the problem of how to rank different forms of corrupted data with the ultimate goal of making informed decisions regarding to the acquisition of data sets. To do so we have introduced a general framework in which many corrupted learning tasks can be expressed. Furthermore, we have derived general upper and lower bounds for the reconstructible subset of corrupted learning problems. Finally, we have shown that in some examples these bounds are tight enough to be of use and that they produce the quantities one would expect. These bounds facilitate the ranking of different corrupted data, either through the use of best case lower bounds or worst case upper bounds. We have shown both \emph{loss specific} and \emph{worst case as the loss is varied} bounds. Future work will attempt to further refine these methods as well as extend the framework to non reconstructible problems such as multiple instance learning and learning with label proportions. Theorems \ref{Collection of Corrupted PAC Bayes} and \ref{Collection of Corrupted Lower Bound} provide means of choosing between data sets that feature collections of different corrupted data. 

\newpage

\section{Appendix}

\subsection{Examples}\label{Examples}

We now show examples of common corrupted learning problems. Once again, our focus is corruption of the \emph{labels} and not the \emph{instances}. Thus we work directly with losses $\ell : Y \times A \rightarrow \RR$. In particular we work with classification problems. We present the worst case upper bound, $\lVert R^* \rVert_{\infty}$, as well as the upper bound relevant for $01$ loss, $\ell_{01}$.

\subsubsection{Noisy Labels}

We consider the problem of learning from noisy binary labels \cite{Angluin1988,Natarajan2013}. Here $\sigma_{i}$ is the probability that class $i$ is flipped. We have 
\begin{align*}
T = \TnoisyLabels & \ \ R^* = \MnoisyLabels.
\end{align*}
This yields 
$$
\tilde{\ell}(y,a) = \frac{(1-\sigma_{-y}) \ell(y,a) - \sigma_y \ell(-y,a)}{1- \sigma_{-1} - \sigma_1}.
$$ 
The above equation is lemma 1 in \cite{Natarajan2013} and is the original method of unbiased estimators. Interestingly, even if $\ell$ is positive, $\tilde{\ell}$ can be negative. If the noise is symmetric with $\sigma_{-1}= \sigma_{1} = \sigma$ and $\ell$ is $01$ loss then
$$
\tilde{\ell}(y,a) = \frac{\ell_{01}(y,a) - \sigma}{1 - 2 \sigma}
$$
which is just a rescaled and shifted version of $01$ loss. If we work in the realizable setting, ie there is some $f \in \mathcal{F}$ with
$$
\EE_{(x,y)\dist P} \ell_{01}(y, f(x)) = 0
$$
then the above provides an interesting correspondence between learning with symmetric label noise and learning under distributions with large Tsybakov margin \cite{Audibert2007}. Taking $\sigma = \frac{1}{2} - h$ with $P$ \emph{separable} in turn implies $\tilde{P}$ has Tsybakov margin $h$. This means bounds developed for this setting \cite{Massart2006} can be transferred to the setting of learning with symmetric label noise. Our lower bound reproduces the results of \cite{Massart2006}
\\
\\
Below is a table of the relevant parameters for learning with noisy binary labels. These results directly extend those present in \cite{Kearns1998} that considered only the case of symmetric label noise. 

\begin{center}
\begin{tabular}{l||l}
\multicolumn{2}{l}{Learning with Label Noisy (Binary)} \\ \bottomrule[2pt]
 $T$ & $\TnoisyLabels$  \\ 
\hline $R^*$ & $\MnoisyLabels$ \\ 
\hline $\alpha(T)$ & $|1-\sigma_{-1} - \sigma_1|$ \\ 
\hline $\lVert R^* \rVert_{\infty}$ & $\frac{1}{|1-\sigma_{-1} - \sigma_1|}\max(1 - \sigma_{-1} + \sigma_1, 1-\sigma_1 + \sigma_{-1})$ \\ 
\hline $\lVert \tilde{\ell}_{01} \rVert_{\infty}$  & $\frac{1}{|1-\sigma_{-1} - \sigma_1|}\max(1 - \sigma_{-1} , 1-\sigma_1 , \sigma_{-1} , \sigma_{1})$ \\
\end{tabular}
\end{center}
We see that as long as $\sigma_{-1} + \sigma_{1} \neq 1$ $T$ is reconstructible. The pattern we see in this table is quite common. $\lVert R^* \rVert_{\infty}$ tends to be marginally greater than $\frac{1}{\alpha(T)}$, with $\lVert \tilde{\ell}_{01} \rVert_{\infty}$ less than both. In the symmetric case our lower bound reproduces those of \cite{Aslam1996}.

\subsubsection{Semi-Supervised Learning}

We consider the problem of semi-supervised learning \cite{Chapelle2010}. Here $1-\sigma_i$ is the probability class $i$ has a missing label. We first consider the easier symmetric case where $\sigma_{-1}= \sigma_{1} = \sigma$. 

\begin{center}
\begin{tabular}{l||l}
\multicolumn{2}{l}{Symmetric Semi-Supervised Learning} \\ \bottomrule[2pt]
 $T$ & $\Tsemisymmetric$  \\ 
\hline $R^*$ & $\Msemisymmetric$ \\ 
\hline $\alpha(T)$ & $\sigma$ \\ 
\hline $\lVert R^* \rVert_{\infty}$ & $\frac{1}{\sigma}$ \\ 
\hline $\lVert \tilde{\ell}_{01} \rVert_{\infty}$  & $\frac{1 - 2 \sigma + 2 \sigma^2}{2\sigma + 3\sigma - 5 \sigma^2} $ \\
\end{tabular}
\end{center}
Once again $\lVert \tilde{\ell}_{01} \rVert_{\infty} \leq \frac{1}{\alpha(T)}$. As long as $\sigma \neq 0$. Our lower bound confirms that in general unlabelled data does not help \cite{Balcan2010}. Rather than using the method of unbiased estimators, one could simply throw away the unlabelled data leaving behind $\sigma n$ labelled instances on average.

\begin{center}
\begin{tabular}{l||l}
\multicolumn{2}{l}{Semi-Supervised Learning} \\ \bottomrule[2pt]
 $T$ & $\Tsemi$  \\  
\hline $\alpha(T)$ & $\max_i \sigma_i$ \\ 
\end{tabular}
\end{center}
Other parameters for the more general case are omitted due to complexity (they involve the maximum of three 4th order rational equations). They are available in closed form.

\subsubsection{Three Class Symmetric Label Noise}

In line with \cite{Kearns1998}, here we present parameters for the three class variant of symmetric label noise. We have $\tilde{Y} = Y = \{1,2,3\}$ with $P(\tilde{Y} = \tilde{y} | Y =y ) = 1 -\sigma$, if $y = \tilde{y}$ and $\frac{\sigma}{2}$ otherwise.  

\begin{center}
\begin{tabular}{l||l}
\multicolumn{2}{l}{Learning with Symmetric Label Noisy (Multiclass)} \\ \bottomrule[2pt]
 $T$ & $\TnoisyLabelsmulticlass$  \\ 
\hline $R^*$ & $\MnoisyLabelsmulticlass$ \\ 
\hline $\alpha(T)$ & $|1 - \frac{3}{2} \sigma |$ \\ 
\hline $\lVert R^* \rVert_{\infty}$ & $\frac{2 + \sigma}{|2 - 3 \sigma |}$ \\ 
\hline $\lVert \tilde{\ell}_{01} \rVert_{\infty}$  & $\frac{2}{| 2 - 3 \sigma |} \max(\sigma, 1- \sigma)$ \\
\end{tabular}
\end{center}
We see that as long as $\sigma \neq \frac{2}{3}$ $T$ is reconstructible. Once again $\lVert \tilde{\ell}_{01} \rVert_{\infty} \leq \frac{1}{\alpha(T)}$.

\subsubsection{Partial Labels}

Here we follow \cite{Cour2011} with $Y = \{1,2,3\}$ and $\tilde{Y} = \{0,1\}^{Y}$ the set of partial labels. A partial label of $(0,1,1)$ indicates that the true label is either $2$ or $3$ but not $1$. We assume that a partial label always includes the true label as one of the possibilities and furthermore that spurious labels are added with probability $\sigma$.

\begin{center}
\begin{tabular}{l||l}
\multicolumn{2}{l}{Learning with Partial Labels} \\ \bottomrule[2pt]
 $T$ & $\left(
 \begin{array}{ccc}
  0 & 0 & (1-\sigma )^2 \\
  0 & (1-\sigma )^2 & 0 \\
  0 & (1-\sigma ) \sigma  & (1-\sigma ) \sigma  \\
  (1-\sigma )^2 & 0 & 0 \\
  (1-\sigma ) \sigma  & 0 & (1-\sigma ) \sigma  \\
  (1-\sigma ) \sigma  & (1-\sigma ) \sigma  & 0 \\
  \sigma ^2 & \sigma ^2 & \sigma ^2 \\
 \end{array}
 \right)$ \\ 
\hline $\alpha(T)$ & $1 - \sigma$ \\ 
\end{tabular}
\end{center}
We see that as long as $\sigma \neq 1$ $T$ is reconstructible. In this case $\lVert \tilde{\ell}_{01} \rVert_{\infty}$ and $\lVert R^* \rVert_{\infty}$ are given by more complicated expressions (however they are both available in closed form). We display their interrelation in a graph in below. To the best of our knowledge, there are no upper and lower bounds are present in the literature for this problem.
\begin{center}
\includegraphics[width=0.8\linewidth]{./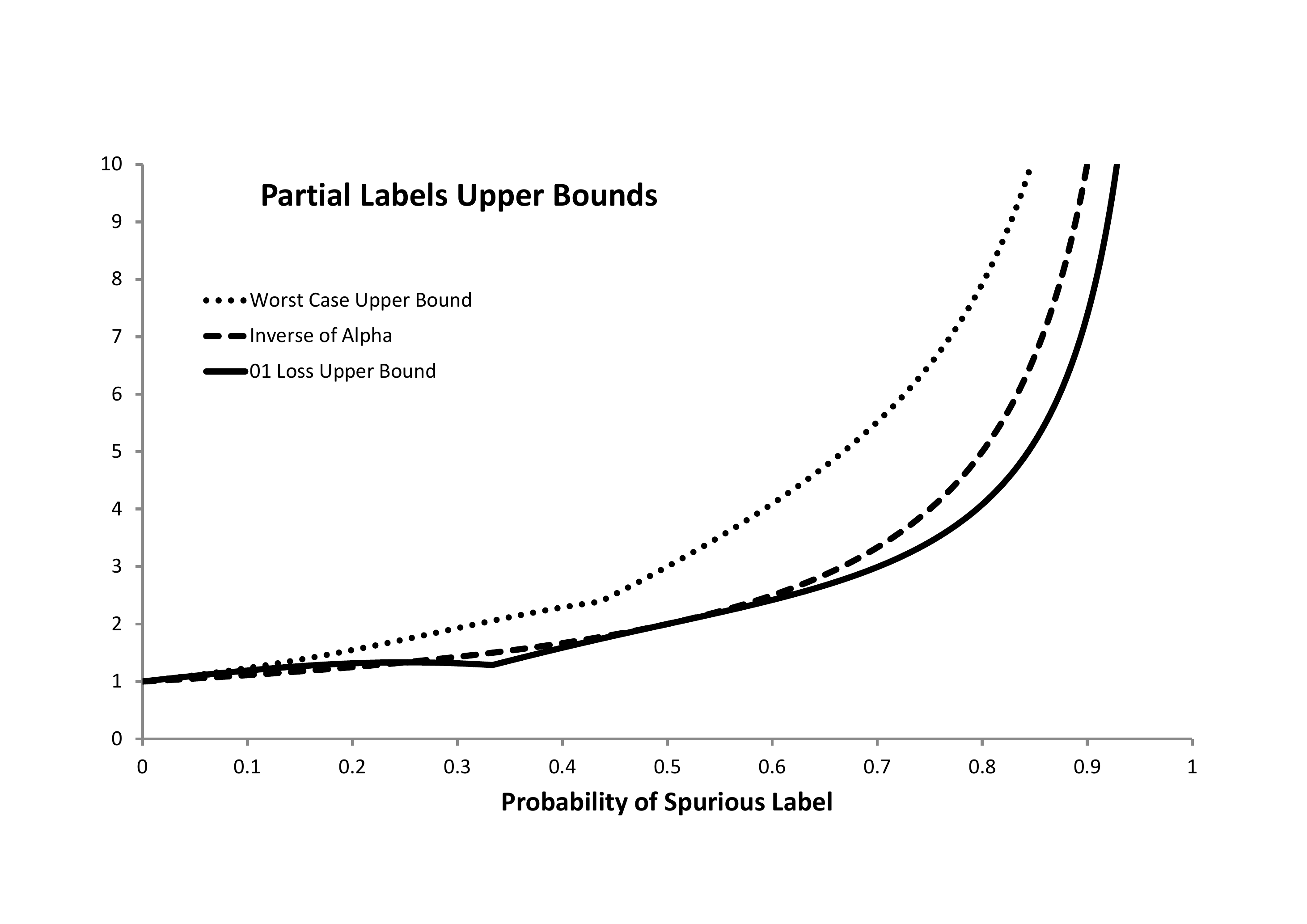}
\end{center}

\subsection{PAC Bayesian Bounds}\label{PAC Bayes Appendix}

PAC Bayesian bounds provide methods to assess the quality of any algorithm $\alg : \obs \rightsquigarrow A$. All of the bounds presented in this section appear in \cite{Zhang2006}. We use the shorthand $\ell(S,a) = \frac{1}{|S|}\sum_{z\in S} \ell(z,a)$.

\begin{theorem}[PAC Bayes]\label{PAC-Bayes}
For all sets $\obs$, $P \in \PP(\obs)$, priors $\pi \in \PP(A)$, algorithms $\alg : \obs \rightsquigarrow A$, functions $L : \obs \times A \rightarrow \RR$ and $\beta > 0$
$$
\EE_{z \dist P} \EE_{a \dist \alg(z)} - \frac{1}{\beta} \log (\EE_{z' \dist P} e^{- \beta L(z',a)} ) \leq \EE_{z \dist P}\left[ L(z,\alg(z)) +\frac{D_{KL}(\alg(z),\pi) }{\beta}  \right].
$$
Furthermore with probability at least $1- \delta$ on a draw $x\dist P$ with $\pi$, $\beta$ and $\alg$ fixed before the draw,
$$
\EE_{a \dist \alg(z)} - \frac{1}{\beta} \log (\EE_{z' \dist P} e^{- \beta L(z',a)} ) \leq L(z,\alg(z)) +\frac{D_{KL}(\alg(z),\pi) + \logdelta}{\beta}.
$$
\end{theorem}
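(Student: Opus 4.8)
The plan is to prove both inequalities from a single application of the change-of-measure (Donsker--Varadhan) inequality, exploiting a test function whose partition function normalizes to one in expectation over the sample. Recall that for any $\rho, \pi \in \PP(A)$ and any bounded $g : A \rightarrow \RR$, nonnegativity of the KL divergence between $\rho$ and the Gibbs tilt of $\pi$ by $g$ yields
$$
\EE_{a \dist \rho} g(a) \leq D_{KL}(\rho,\pi) + \log \EE_{a \dist \pi} e^{g(a)}.
$$
First I would fix $z$ and apply this with $\rho = \alg(z)$ and the test function
$$
g_z(a) = -\beta L(z,a) - \log \EE_{z' \dist P} e^{-\beta L(z',a)}.
$$
The point of this choice is that it reconstructs the left-hand sides of the theorem: dividing the resulting inequality by $\beta$ turns $\EE_{a \dist \alg(z)} g_z(a)$ into exactly the quantity $\EE_{a \dist \alg(z)}[-\tfrac{1}{\beta}\log \EE_{z' \dist P} e^{-\beta L(z',a)}] - L(z,\alg(z))$ that we wish to bound against $D_{KL}(\alg(z),\pi)/\beta$.

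The key computation is the partition function. Substituting $g_z$ gives
$$
\EE_{a \dist \pi} e^{g_z(a)} = \EE_{a \dist \pi} \frac{e^{-\beta L(z,a)}}{\EE_{z' \dist P} e^{-\beta L(z',a)}} =: M(z),
$$
and the crucial observation is that taking the outer expectation over $z \dist P$ and swapping the order of integration (Fubini) collapses the denominator with the numerator, giving $\EE_{z \dist P} M(z) = \EE_{a \dist \pi} 1 = 1$. Everything else follows from this single identity together with the sign $M(z) \geq 0$.

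For the first (in-expectation) bound I would take $\EE_{z \dist P}$ of the change-of-measure inequality and control the residual log-partition term by Jensen: since $\log$ is concave, $\EE_{z \dist P} \log M(z) \leq \log \EE_{z \dist P} M(z) = \log 1 = 0$. Dividing through by $\beta$ and rearranging then reproduces the first display exactly. For the second (high-probability) bound I would instead keep $z$ fixed and apply Markov's inequality to the nonnegative variable $M(z)$ of mean one: $\PP_{z \dist P}(M(z) \geq 1/\delta) \leq \delta$, so with probability at least $1-\delta$ we have $\log M(z) \leq \logdelta$. On this event the change-of-measure inequality, divided by $\beta$ and rearranged, produces the second display.

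The main obstacle is conceptual rather than computational: identifying the test function $g_z$ that simultaneously reproduces the log-partition and expected-loss structure on the two sides of the theorem and has a partition function $M(z)$ that averages to one over the data. Once this choice is fixed, the in-expectation bound is a one-line Jensen argument and the high-probability bound is a one-line Markov argument. The only point requiring care is to note that, for each fixed $z$, both $\alg(z)$ and $g_z$ are fixed functions of $a$, so the change-of-measure inequality applies pointwise in $z$ even though the ``posterior'' $\alg(z)$ is data dependent; the dependence on $z$ is handled entirely by the final Jensen or Markov step.
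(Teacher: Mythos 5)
Your proof is correct. The paper does not supply its own proof of this theorem---it simply cites Zhang (2006)---and your argument (change of measure with the test function $g_z(a) = -\beta L(z,a) - \log \EE_{z' \dist P} e^{-\beta L(z',a)}$, whose partition function $M(z)$ has unit mean over $z \dist P$, followed by Jensen for the in-expectation bound and Markov for the high-probability bound) is precisely the standard derivation underlying that cited result, so it fills the gap faithfully.
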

Combined with standard bounds of the cumulant generating function, theorem \ref{PAC-Bayes} leads to useful generalization bounds.

\begin{lemma}\label{cumulantlowerbound}
Let $\phi : \obs \rightarrow [-a,a]$, then for all $\beta >0$ and all $P$
$$
\EE_{P} \phi - \frac{a^2 \beta }{2} \leq -\frac{1}{\beta}\log(\EE_{P} e^{-\beta \phi} ) 
$$
\end{lemma}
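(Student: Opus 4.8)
The plan is to recognize this as the standard sub-Gaussian cumulant bound (Hoeffding's lemma) and to massage it into the stated form. Since $\beta > 0$, multiplying through by $-\beta$ reverses the inequality, so the claim is equivalent to
$$
\log\left(\EE_P e^{-\beta\phi}\right) \leq -\beta\,\EE_P\phi + \frac{a^2\beta^2}{2}.
$$
This is exactly a bound on the cumulant generating function of $-\phi$ evaluated at $\beta$, with the mean term pulled out, so the whole problem is to control the logarithmic moment generating function of a bounded random variable.

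Next I would center the variable. Writing $\psi = \phi - \EE_P\phi$ so that $\EE_P\psi = 0$, I factor out the mean,
$$
\EE_P e^{-\beta\phi} = e^{-\beta\EE_P\phi}\,\EE_P e^{-\beta\psi},
$$
which reduces the target to showing $\log\EE_P e^{-\beta\psi} \leq \tfrac{1}{2}a^2\beta^2$. Since $\phi$ ranges in $[-a,a]$, the centered variable $\psi$ takes values in an interval of width exactly $2a$ (centering shifts but does not widen the range).

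The heart of the argument is Hoeffding's lemma: for a mean-zero random variable confined to an interval of width $w$, the cumulant generating function $\Lambda(t) = \log\EE e^{t\psi}$ satisfies $\Lambda(t) \leq t^2 w^2/8$. I would prove this by the usual route. One checks $\Lambda(0) = 0$ and $\Lambda'(0) = \EE_P\psi = 0$, while $\Lambda''(t)$ equals the variance of $\psi$ under the exponentially tilted measure $dP_t \propto e^{t\psi}\,dP$. A second-order Taylor expansion with remainder then gives $\Lambda(t) \leq \tfrac{1}{2}t^2 \sup_s \Lambda''(s)$. The only place any genuine work occurs is the elementary fact that a random variable supported on an interval of width $w$ has variance at most $w^2/4$; this is the main obstacle, and it follows from the observation that the variance is bounded by $\EE(\psi - m)^2$ for the midpoint $m$ of the interval, where $|\psi - m| \leq w/2$ pointwise.

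Finally I would substitute $w = 2a$ and $t = -\beta$, obtaining
$$
\log\EE_P e^{-\beta\psi} \leq \frac{\beta^2 (2a)^2}{8} = \frac{a^2\beta^2}{2},
$$
which is precisely the reduced target. Combining this with the factorization above yields $\log\EE_P e^{-\beta\phi} \leq -\beta\,\EE_P\phi + \tfrac{1}{2}a^2\beta^2$, and dividing back through by $-\beta$ (again reversing the inequality) recovers the stated bound. Everything beyond the variance-by-width estimate is routine bookkeeping.
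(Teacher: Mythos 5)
Your proof is correct: the lemma is exactly Hoeffding's lemma applied to $-\phi$, and the paper's own ``proof'' is simply a citation to appendix A.1 of Cesa-Bianchi and Lugosi, which contains precisely the argument you give (centering, bounding the second derivative of the cumulant generating function by the variance under the tilted measure, and the width-$w$ variance bound $w^2/4$). You have merely written out in full the standard proof that the paper defers to by reference, so the two approaches coincide.
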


\begin{proof}
See appendix A.1 of \cite{Cesa-Bianchi2006}.

\end{proof}

\subsection{Proof of Theorem \ref{Collection of Corrupted PAC Bayes}}

\begin{proof}
Define $L(\tilde{S},a) = \sum_{i=1}^k \sum_{\tilde{z} \in \tilde{S}_i } \tilde{\ell}_i(\tilde{z}_i,a)$, the sum of the corrupted losses on the sample. We have by theorem \ref{PAC-Bayes}
\begin{align*}
\EE_{\tilde{S} \dist Q} \EE_{a \dist \alg(\tilde{S})} - \frac{1}{\beta} \log (\EE_{\tilde{S}' \dist Q} e^{- \beta L(\tilde{S}',a)} ) &\leq \EE_{\tilde{S} \dist Q}\left[ L(\tilde{S},\alg(\tilde{S})) +\frac{D_{KL}(\alg(\tilde{S}),\pi)}{\beta}  \right] \\
\sum\limits_{i=1}^{k} n_i \EE_{\tilde{S} \dist Q} \EE_{a \dist \alg(\tilde{S})} - \frac{1}{\beta} \log (\EE_{\tilde{z} \dist \tilde{P}_i} e^{- \beta \tilde{\ell_i}(\tilde{z},a)} ) &\leq \EE_{\tilde{S} \dist Q}\left[ L(\tilde{S},\alg(\tilde{S})) +\frac{D_{KL}(\alg(\tilde{S}),\pi)}{\beta}  \right]
\end{align*}
where the first line follows from theorem \ref{PAC-Bayes} and the second from properties of the cumulant generating function. Invoking lemma \ref{cumulantlowerbound} yields
$$
\sum\limits_{i=1}^{k}n_i \left(\EE_{\tilde{S} \dist Q} \tilde{\ell}_i(\tilde{P}_i,\alg(\tilde{S})) - \frac{\lVert \tilde{\ell}_i\rVert^2_\infty \beta}{2} \right) \leq \EE_{\tilde{S} \dist Q}\left[ L(\tilde{S},\alg(\tilde{S})) +\frac{D_{KL}(\alg(\tilde{S}),\pi) }{\beta} \right].
$$
As the $T_i$ are reconstructible, 
$$
\EE_{\tilde{S} \dist Q} \ell(P,\alg(\tilde{S})) \leq \frac{1}{n} \EE_{\tilde{S} \dist Q}\left[ L(\tilde{S},\alg(\tilde{S})) +\frac{D_{KL}(\alg(\tilde{S}),\pi)}{\beta}  \right] + \frac{\left(\sum\limits_{i=1}^{k} r_i \lVert \tilde{\ell}_i\rVert^2_\infty \right) \beta}{2}.
$$
Optimizing over $\beta$ yields the desired result.

\end{proof}

\subsection{Le Cam's Method and Minimax Lower Bounds}\label{sec:le-cam's-method-and-minimax-lower-bounds PROOF}
The development here closely follows \cite{Duchi2013} with some streamlining. We consider a general learning problem with unknowns $\Theta$, observation space $\obs$ and loss $L : \Theta \times A \rightarrow \RR$. For any learning algorithm $\alg : \obs \rightsquigarrow \Theta$, we wish to lower bound the max risk 
$$
\sup_{\theta} \EE_{z \dist e(\theta)} L(\theta,\alg(z)).
$$ 
The method proceeds by reducing a general decision problem to an easier binary classification problem. First one considers a supremum over a restricted set $\{\theta_1, \theta_2\}$. Using Markov's inequality we then relate this to the minimum $01$ loss in a particular binary classification problem. Finally one finds a lower bound for this quantity. With $\theta \dist \{\theta_1, \theta_2\}$ meaning $\theta$ is drawn uniformly at random from the set $\{\theta_1, \theta_2\}$, we have
\begin{align}
\sup_{\theta} \EE_{z \dist e(\theta)}\EE_{a \dist \alg(z)} \relloss(\theta,a) &\geq \sup_{\{\theta_1, \theta_2\}} \EE_{z \dist e(\theta)}\EE_{a \dist \alg(z)} \relloss(\theta,a) \nonumber \\
&\geq \EE_{\theta \dist \{\theta_1, \theta_2\}} \EE_{z \dist e(\theta)}\EE_{a \dist \alg(z)} \relloss(\theta,a) \nonumber \\
&\geq \delta \EE_{\theta \dist \{\theta_1, \theta_2\}} \EE_{z \dist e(\theta)}\EE_{a \dist \alg(z)} \pred{\relloss(\theta,a) \geq \delta}.
\end{align}
Recall the \emph{separation} $\rho : \Theta \times \Theta \rightarrow \RR$, $\rho(\theta_1,\theta_2) = \inf_a \relloss(\theta_1,a) + \relloss(\theta_2,a)$. The separation measures how hard it is to act well against both $\theta_1$ and $\theta_2$ simultaneously. We now assume $\rho(\theta_1,\theta_2) > 2 \delta$. Define $f : A \rightarrow \{\theta_1, \theta_2, \text{error}\}$ where $f(a) = \theta_i$ if $\relloss(\theta_i,a) < \delta$ and error otherwise. This function is well defined as if there exists an action $a$ with $\relloss(\theta_1,a) < \delta$ and $\relloss(\theta_2,a) < \delta$ then $\rho(\theta_1,\theta_2) < 2\delta$ a contradiction. Let $\hat{\alg}$ be the classifier that first draws $a\dist \alg(z)$ and then outputs $f(a)$ we have
\begin{align*}
\sup_{\theta} \EE_{z \dist e(\theta)}\EE_{a \dist \alg(z)} \relloss(\theta,a) &\geq \delta \EE_{\theta \dist \{\theta_1, \theta_2\}} \EE_{z \dist e(\theta)}\EE_{\theta' \dist \hat{\alg}(z)} \pred{\theta\neq \theta'}   \\
&\geq \delta \inf_{\hat{\alg} : \obs \rightsquigarrow \Theta} \EE_{\theta \dist \{\theta_1, \theta_2\}} \EE_{z \dist e(\theta)}\EE_{\theta' \dist \hat{\alg}(z)} \pred{\theta\neq \theta'}  \\
&= \delta \left(\frac{1}{2} - \frac{1}{2} V(e(\theta_1), e(\theta_2)) \right)
\end{align*}
where the first line is a rewriting of (1) in terms of the classifier $\hat{\alg}$, the second takes an infimum over all classifiers and the final line is a standard result in theoretical statistics \cite{Reid2009b}. Taking $\delta = \frac{\rho(\theta_1,\theta_2)}{2}$ yields lemma \ref{Le Cam Lemma}.

\subsection{Proof of Lemma \ref{Variational Divergence for product distribitions}}

\begin{proof}
Firstly $V$ is a \emph{metric} on $\PP(\times_{n=1}^k \obs_i)$ \cite{Reid2009b}. Thus 
\begin{align*}
V(\otimes_{i=1}^k P_i, \otimes_{i=1}^k Q_i) &= V(P_1 \otimes (\otimes_{i=2}^k P_i), Q_1 \otimes (\otimes_{i=2}^k Q_i)) \\
&\leq V(P_1 \otimes (\otimes_{i=2}^k P_i) , Q_1 \otimes (\otimes_{i=2}^k P_i)) + V(Q_1 \otimes (\otimes_{i=2}^k P_i) , Q_1 \otimes (\otimes_{i=2}^k Q_i)) \\
&= V(P_1, Q_1) + V(\otimes_{i=2}^k P_i, \otimes_{i=2}^k Q_i)
\end{align*}
where the first line is by definition, the second as $V$ is a metric and the third is easily verified from the definition of $V$. To complete the proof proceed inductively.
\end{proof}

\subsection{Proof of Lemma \ref{Deconstruction and KL Lemma}}

\begin{proof}
\begin{align*}
\Df(T(P),T(Q)) &= \Df(\lambda F(P) + (1-\lambda) G(P),\lambda F(Q) + (1-\lambda) G(Q)) \\
&\leq \lambda \Df(F(P),F(Q)) + (1 - \lambda) \Df(G(P),G(Q)) \\
&= (1 - \lambda) \Df(G(P),G(Q)) \\
&\leq (1 - \lambda) \Df(P,Q)
\end{align*}
Where the first line follows from the definition, the second from the joint convexity of $f$-divergences \cite{Reid2009b}, the third because $F(P)=F(Q)$ and $D_f(P,P) = 0$ and finally the fourth is from the standard data processing inequality \cite{Reid2009b}.

\end{proof}

\subsection{Proof of Lemma \ref{Existence of Decontruction Lemma}}

The proof of the forward implication is lemma 2 of \cite{Boyen1998}. We prove the reverse implication.
\begin{proof}
As this decomposition works for all pairs of distributions we can take $P = \delta_{x_i} = e_i$ and $Q = \delta_{x_j} =e_j$. As $F(P) = F(Q)$ we must have $F_{ki} = F_{kj} = v_k$ for all $k$. As all of the entries of $(1-\lambda) G$ are positive, we have 
$\lambda v_k \leq T_{ki}$ and $\lambda v_k \leq T_{kj}$. Hence $\lambda v_k \leq \min(T_{ki},T_{kj})$. Summing over $k$ and remembering that $F$ is column stochastic gives $\lambda \leq \sum_k \min(T_{k,i}, T_{k,j})$. As $i$ and $j$ are arbitrary we have the desired result.
\end{proof}

\subsection{Proof of Theorem \ref{Collection of Corrupted Lower Bound}}

\begin{proof}
Let
$$
T = \otimes_{i=i}^k T_i^{n_i} = \underbrace{T_1 \otimes \dots \otimes T_1}_{n_1 \ \text{times}} \otimes \underbrace{T_2 \otimes \dots \otimes T_2}_{n_2 \ \text{times}} \dots \otimes \underbrace{T_k \otimes \dots \otimes T_k}_{n_k \ \text{times}}.
$$
One has $T(e_n(\theta)) = T_1(e(\theta))^{n_1}\otimes T_2(e(\theta))^{n_2} \otimes \dots \otimes T_k(e(\theta))^{n_k}$. By lemma \ref{Variational Divergence for product distribitions},
\begin{align*}
V(T(e_n(\theta_1)), T(e_n(\theta_2)) &\leq \sum_{i=1}^k n_i V(T_i(e(\theta_1)), T_i(e(\theta_2)) ) \\ 
&\leq \left( \sum\limits_{i=1}^{k}\alpha(T_i) n_i \right) V(e(\theta_1),e(\theta_2)).
\end{align*}
Now proceed as in the proof of theorem \ref{Relative Lower Bound}.
\end{proof}

\subsection{Proof of Lemma \ref{Alpha Composition}}

\begin{proof}
\begin{align*}
\alpha(T_2 T_1) &= \sup_{P, Q \in \PP(X)} \frac{\lVert T_2 T_1(P) - T_2 T_1(Q)  \rVert_1}{\lVert P - Q \rVert_1} \\
&= \sup_{P, Q \in \PP(X)} \frac{\lVert T_2 T_1(P) - T_2 T_1(Q)  \rVert_1}{\lVert T_1(P) - T_2(Q) \rVert_1} \frac{\lVert T_1(P) - T_2(Q) \rVert_1}{\lVert P - Q \rVert_1} \\ 
&\leq \sup_{P, Q \in \PP(X)} \frac{\lVert T_2 T_1(P) - T_2 T_1(Q)  \rVert_1}{\lVert T_1(P) - T_2(Q) \rVert_1} \sup_{P, Q \in \PP(X)} \frac{\lVert T_1(P) - T_2(Q) \rVert_1}{\lVert P - Q \rVert_1} \\
&\leq \sup_{P, Q \in \PP(Y)} \frac{\lVert T_2 (P) - T_2 (Q)  \rVert_1}{\lVert P - Q \rVert_1} \sup_{P, Q \in \PP(X)} \frac{\lVert T_1(P) - T_2(Q) \rVert_1}{\lVert P - Q \rVert_1} \\
&= \alpha(T_2) \alpha(T_1)
\end{align*}
Where the first line follows from the definitions, the second follows if $T_1(P) \neq T_2(Q)$ and the rest are simple rearrangements. For the final inequality, remember that $\alpha(T) \leq 1$.
\end{proof} 

\subsection{Proof of Lemma \ref{noisy loss norm}}

\begin{proof}
By definition $\lVert \tilde{\ell} \rVert_\infty = \sup_{z,a} |\tilde{\ell}(z,a)| = \sup_a \lVert \tilde{\ell}_a \rVert_\infty$. Hence 
\begin{align*}
\lVert \tilde{\ell} \rVert_\infty &= \sup_a \lVert \tilde{\ell}_a \rVert_\infty \\
&\leq \sup_a \lVert R^*\rVert_\infty \lVert \ell_a \rVert_\infty \\
&= \lVert R^*\rVert_\infty \lVert \ell \rVert_\infty
\end{align*} 
where the second line follows from the definition of the operator norm $\lVert R^*\rVert_\infty$.
\end{proof}

\subsection{Proof of Lemma \ref{adjoint inverses}}

\begin{proof}
Firstly $\lVert R \rVert_{1}  = \lVert R^* \rVert_{\infty}$ \cite{Bernstein2009}. From the definition of $\lVert R \rVert_{1}$ we have

\begin{align*}
\lVert R \rVert_{1} &= \sup_{v \in \RR^Y} \frac{\lVert R v \rVert_{1}}{\lVert v\rVert_{1}} \\
&\geq \sup_{u \in \RR^X} \frac{\lVert R T u \rVert_{1}}{\lVert T u\rVert_{1}} \\
&= \sup_{u \in \RR^X} \frac{\lVert u \rVert_{1}}{\lVert T u\rVert_{1}} \\
&= 1 / \left( \inf_{u \in \RR^X} \frac{\lVert T u\rVert_{1}}{\lVert u \rVert_{1}} \right)
\end{align*}
this proves the first inequality. Recall one of the equivalent definitions of $\alpha(T)$ from section \ref{sec:Variational Alpha}
$$
\alpha(T) = \sup_{v \in S} \frac{\lVert T(v) \rVert_1}{\lVert v \rVert_1}
$$
where $S = \{v \in \RR^X : \sum v_i = 0, v \neq 0\}$. Hence trivially $\inf_{u \in \RR^X} \frac{\lVert T u \rVert_{1}}{\lVert u \rVert_{1}} \leq \alpha(T)$.

\end{proof}

\subsection{Corrupted Learning when Clean Learning is Fast}\label{fast learning}

There are many conditions under which clean learning is fast, here we focus on the Bernstein condition presented in \cite{Erven2012}.

\begin{definition}
Let $P\in \PP(\obs)$, $\ell$ a loss and $a_P = \argmin_a \EE_{z \dist P} \ell(z,a)$. A pair $(\ell, P)$ satisfies the \emph{Bernstein condition} with constant $K$ if for all $a \in A$
$$
\EE_{z \dist P} (\ell(z,a) - \ell(z,a_P))^2 \leq K \ \EE_{z \dist P} \ell(z,a) - \ell(z,a_P)
$$
\end{definition}
When $A$ is finite, such a condition leads to $\frac{1}{n}$ rates of convergence.  From results in \cite{Zhang2006} we have the following theorem.

\begin{theorem}[PAC Bayes Bernstein]\label{PAC Bayes Bernstein}
Let $\gamma = \frac{(e^\beta - 1 - \beta)}{\beta  \lVert \ell \rVert_{\infty}}$. For all $P$, priors $\pi$, algorithms $\alg$, bounded losses $\ell$ and $\beta > 0$
$$
\EE_{S\dist P^n} \left[ \ell(P,\alg(S)) - \gamma \ell^2(P,\alg(S)) \right]\leq \EE_{S\dist P^n}\left[\ell(S,\alg(S))  +  \lVert \ell \rVert_{\infty}\left(\frac{D_{KL}(\alg(S),\pi)}{\beta n} \right)\right] .
$$
Furthermore with probability at least $1-\delta$ on a draw $S \dist P^n$ with $\alg$, $\beta$ and $\pi$ chosen before the draw
$$
\ell(P,\alg(S)) - \gamma \ell^2(P,\alg(S)) \leq \left[\ell(S,\alg(S))  +  \lVert \ell \rVert_{\infty}\left(\frac{D_{KL}(\alg(S),\pi) + \logdelta}{\beta n}  \right) \right]. 
$$
\end{theorem}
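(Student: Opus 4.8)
The plan is to derive both statements from the PAC-Bayes inequality of Theorem \ref{PAC-Bayes} applied to the replicated experiment, replacing the crude cumulant bound of Lemma \ref{cumulantlowerbound} with a sharper second-order (Bernstein-type) bound. Concretely, I would instantiate Theorem \ref{PAC-Bayes} with observation space $\obs^n$, distribution $P^n$, the \emph{summed} loss $L(S,a) = \sum_{z \in S}\ell(z,a)$, and inverse temperature $\beta / \lVert \ell \rVert_\infty$ in place of $\beta$ (equivalently, normalising the loss to unit range). This scaling is exactly what produces the $\lVert \ell \rVert_\infty$ factor on the $D_{KL}$ term and the $\beta n$ in its denominator after dividing through by $n$, and it mirrors the summed-loss device already used in the proof of Theorem \ref{Collection of Corrupted PAC Bayes}.

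The first structural step is to factor the moment generating function over the iid sample. Since $L(S',a) = \sum_{i=1}^n \ell(z'_i,a)$ with the $z'_i$ drawn iid from $P$, one has $\EE_{S' \dist P^n}e^{-(\beta/\lVert\ell\rVert_\infty)L(S',a)} = \big(\EE_{z'\dist P}e^{-(\beta/\lVert\ell\rVert_\infty)\ell(z',a)}\big)^n$, so the per-sample log-MGF on the left of Theorem \ref{PAC-Bayes} collapses to $n$ times a single-example log-MGF. After dividing the whole inequality by $n$, the right-hand side becomes $\EE_{S\dist P^n}[\ell(S,\alg(S)) + \lVert\ell\rVert_\infty D_{KL}(\alg(S),\pi)/(\beta n)]$, matching the target, while the left-hand side reduces to the $a$-averaged single-example term $-\tfrac{\lVert\ell\rVert_\infty}{\beta}\log \EE_{z'\dist P}e^{-(\beta/\lVert\ell\rVert_\infty)\ell(z',a)}$.

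The crux is then a pointwise cumulant lower bound replacing Lemma \ref{cumulantlowerbound}: for each fixed action $a$,
$$
-\frac{\lVert\ell\rVert_\infty}{\beta}\log \EE_{z'\dist P}e^{-(\beta/\lVert\ell\rVert_\infty)\ell(z',a)} \;\geq\; \ell(P,a) - \gamma\, \ell^2(P,a),
$$
where $\ell^2(P,a) = \EE_{z'\dist P}\ell(z',a)^2$ is the second moment. I would prove this by combining $\log t \leq t-1$ with the elementary inequality $e^{-z} - 1 + z \leq c\, z^2$, optimising $c$ via the monotonicity of $g(z) = (e^z - 1 - z)/z^2$. The delicate point is the \emph{range} of the loss: because corruption-corrected losses can be negative, $\ell(z',a)$ lies in $[-\lVert\ell\rVert_\infty,\lVert\ell\rVert_\infty]$, so the relevant argument of $g$ runs out to $\beta$ at the endpoint, which is precisely why the optimal constant is $\gamma = (e^\beta - 1 - \beta)/(\beta\lVert\ell\rVert_\infty)$ rather than the smaller $\beta/(2\lVert\ell\rVert_\infty)$ one would get for nonnegative losses. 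I expect this constant-tracking — keeping the direction of every inequality correct so that an \emph{upper} bound on the log-MGF yields a \emph{lower} bound on the negative-log term, and landing on exactly this $\gamma$ — to be the main obstacle; the remainder is bookkeeping.

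Finally, substituting this bound back, pulling $\EE_{a\dist\alg(S)}$ inside so the single-example terms become $\ell(P,\alg(S))$ and $\ell^2(P,\alg(S))$, and taking $\EE_{S\dist P^n}$ yields the in-expectation statement. For the high-probability statement I would run the identical argument starting from the second display of Theorem \ref{PAC-Bayes}, where the whole sample $S$ plays the role of the single observation; the factoring of the MGF and the per-example cumulant bound are unchanged, and the extra $\logdelta$ is simply carried along, producing the $\lVert\ell\rVert_\infty(D_{KL}(\alg(S),\pi) + \logdelta)/(\beta n)$ term after division by $n$.
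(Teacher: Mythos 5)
Your proposal is correct, but it cannot be compared line-by-line with the paper's proof for a simple reason: the paper does not prove Theorem \ref{PAC Bayes Bernstein} at all --- it is imported wholesale with the remark ``From results in \cite{Zhang2006} we have the following theorem.'' What you have written is a self-contained derivation in the same style the paper uses for its \emph{other} PAC-Bayes corollaries: instantiate Theorem \ref{PAC-Bayes} on $\obs^n$ with the summed loss and temperature $\beta/\lVert\ell\rVert_\infty$, tensorise the moment generating function over the iid sample, divide by $n$, and then lower-bound the per-example cumulant. The only genuinely new ingredient relative to the paper's proof of Theorem \ref{Collection of Corrupted PAC Bayes} is your replacement of Lemma \ref{cumulantlowerbound} by the second-order bound
$$
-\tfrac{1}{b}\log \EE\, e^{-b\phi} \;\geq\; \EE\phi - \tfrac{e^{\beta}-1-\beta}{\beta^{2}}\, b\, \EE\phi^{2}, \qquad b = \beta/\lVert\ell\rVert_\infty,
$$
which follows, exactly as you say, from $\log t \leq t-1$ together with the monotonicity of $(e^{x}-1-x)/x^{2}$ applied to $x = -b\phi(z') \geq -\beta$; substituting $b=\beta/\lVert\ell\rVert_\infty$ gives precisely $\gamma = (e^{\beta}-1-\beta)/(\beta\lVert\ell\rVert_\infty)$, and your observation that the worst case sits at the \emph{lower} endpoint $\phi = -\lVert\ell\rVert_\infty$ (so that only boundedness below is really used, and nonnegative losses would yield the smaller $\beta/(2\lVert\ell\rVert_\infty)$) is the right accounting. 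Your reading of $\ell^{2}(P,a)$ as the second moment $\EE_{z\dist P}\ell(z,a)^{2}$ is also the one forced by the subsequent application of the Bernstein condition in the fast-rates theorem. What your route buys is a proof that lives entirely inside the paper's own Theorem \ref{PAC-Bayes} rather than an external citation; what the paper's route buys is brevity and deference to \cite{Zhang2006}, where the same information-exponential-inequality machinery yields this and sharper variants.
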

We are now in a position to show that the Bernstein condition leads to fast rates for ERM.
\begin{theorem}(Fast Rates for ERM)
Let $\alg$ be ERM with $A$ finite. If $(\ell,P)$ satisfies the Bernstein condition then for some constant $C$
$$
\EE_{S \dist P^n} \ell(P,\alg(S)) - \ell(P,a_P) \leq \frac{C \log(|A|)}{n}.
$$
Furthermore with probability at least $1- \delta$ on a draw from $P^n$ one has 
$$
\ell(P,\alg(S)) - \ell(P,a_P) \leq \frac{C\left(\log(|A|) + \logdelta\right)}{n}.
$$
\end{theorem}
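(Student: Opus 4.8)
The plan is to apply Theorem~\ref{PAC Bayes Bernstein} not to $\ell$ directly but to the \emph{excess loss} $\Delta\ell(z,a) = \ell(z,a) - \ell(z,a_P)$, with prior $\pi$ uniform on the finite set $A$ and $\alg$ the (deterministic) ERM rule, so that $\alg(S)$ is a point mass on the empirical minimiser $\hat a(S) = \argmin_{a} \ell(S,a)$. The point of passing to $\Delta\ell$ is that its population risk is exactly the excess risk $\Delta\ell(P,a) = \ell(P,a) - \ell(P,a_P) \geq 0$, its second moment $\Delta\ell^2(P,a) = \EE_{z\dist P}(\ell(z,a) - \ell(z,a_P))^2$ is precisely the left-hand side of the Bernstein condition, and $\argmin_a \Delta\ell(S,a) = \argmin_a \ell(S,a)$ so ERM is unchanged. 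Note $\norm{\Delta\ell}_\infty \leq 2\norm{\ell}_\infty$, so all norms stay finite.

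Then I would carry out three reductions on the bound of Theorem~\ref{PAC Bayes Bernstein} (applied to $\Delta\ell$, so that $\gamma = (e^\beta - 1 - \beta)/(\beta \norm{\Delta\ell}_\infty) \geq 0$). First, since $\Delta\ell(P,\alg(S)) \geq 0$, the Bernstein condition $\Delta\ell^2(P,\alg(S)) \leq K\,\Delta\ell(P,\alg(S))$ lets me lower bound the left-hand side pointwise by $(1 - \gamma K)\,\Delta\ell(P,\alg(S))$. Second, because $\hat a(S)$ minimises the empirical loss, $\Delta\ell(S,\alg(S)) = \ell(S,\hat a(S)) - \ell(S,a_P) \leq 0$, so this term can be dropped from the right-hand side. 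Third, for a point-mass posterior against the uniform prior, $\KL(\alg(S),\pi) = \log|A|$. Combining these gives, in expectation over $S \dist P^n$,
\[
(1 - \gamma K)\,\EE_{S\dist P^n}\big[\ell(P,\alg(S)) - \ell(P,a_P)\big] \;\leq\; \frac{\norm{\Delta\ell}_\infty \log|A|}{\beta n}.
\]

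The final step, and the only place requiring care, is the choice of $\beta$. Since $(e^\beta - 1 - \beta)/\beta \to 0$ as $\beta \to 0^+$, we have $\gamma K \to 0$, so there is a \emph{constant} $\beta > 0$ (depending on $K$ and $\norm{\ell}_\infty$ but not on $n$) with $\gamma K \leq \tfrac12$, whence $1 - \gamma K \geq \tfrac12$. Fixing such a $\beta$ and dividing through yields the first claim with $C = 4\norm{\ell}_\infty/\beta$. The main obstacle is precisely ensuring this $\beta$ can be taken independent of $n$ while keeping $1-\gamma K$ bounded away from zero; everything else is bookkeeping. The high-probability statement is identical: apply the second, in-probability half of Theorem~\ref{PAC Bayes Bernstein} to $\Delta\ell$, which replaces $\KL(\alg(S),\pi)$ by $\log|A| + \logdelta$, and repeat the same three reductions to obtain the $\frac{C(\log|A| + \logdelta)}{n}$ bound.
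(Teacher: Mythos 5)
Your proposal is correct and follows essentially the same route as the paper's proof: apply Theorem~\ref{PAC Bayes Bernstein} to the excess loss $\ell_P(z,a)=\ell(z,a)-\ell(z,a_P)$ with uniform prior, drop the empirical term because ERM makes it nonpositive, absorb the quadratic term via the Bernstein condition into a factor $(1-\gamma K)$, and fix a constant $\beta$ independent of $n$. If anything you are slightly more careful than the paper at the last step, requiring $\gamma K\leq\tfrac12$ so that $1-\gamma K$ is bounded away from zero, whereas the paper only states $\gamma K\leq 1$.
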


\begin{proof}
First, define $\ell_P(z,a) = \ell(z,a) - \ell(z,a_P)$. $l_P$ measures the loss relative to the best action for the distribution $P$. It is easy to verify that for bounded $\ell$, $\lVert \ell_P \rVert_\infty \leq 2 \lVert \ell \rVert_\infty$. We now utilize theorem \ref{PAC Bayes Bernstein} with $\ell_P$ and $\pi$ uniform on $A$. This yields
$$
\EE_{S\dist P^n} \left[ \ell_P(P,\alg(S)) - \gamma \ell_P^2(P,\alg(S)) \right]\leq \frac{1}{n} \EE_{S\dist P^n}\left[\ell_P(S,\alg(S))  + \lVert \ell_P \rVert_{\infty}\left(\frac{\log(|A|)}{\beta} \right)\right]
$$
with $\gamma = \frac{(e^\beta - 1 - \beta)}{\beta  \lVert \ell_P \rVert_{\infty}}$. Firstly ERM minimizes the right hand side of the bound meaning
$$
\frac{1}{n} \EE_{S\dist P^n}\left[\ell_P(S,\alg(S))  + \lVert \ell_P \rVert_{\infty}\left(\frac{\log(|A|)}{\beta} \right)\right] \leq \frac{1}{n} \left[\lVert \ell_P \rVert_{\infty}\left(\frac{\log(|A|)}{\beta} \right)\right].
$$
To see this consider the algorithm that always outputs $a_P$, this algorithm generalizes very well however it may be suboptimal on the sample. Secondly $(\ell,P)$ satisfies the Bernstein condition with constant $K$. Therefore
$$
(1 - \gamma K) \EE_{S\dist P^n} \ell_P(P,\alg(S)) \leq \frac{1}{n} \left[\lVert \ell_P \rVert_{\infty}\left(\frac{\log(|A|)}{\beta} \right)\right].
$$
Finally chose $\beta$ small enough so that $\gamma K \leq 1$. This can always be done as $\gamma \rightarrow 0$ as $\beta \rightarrow 0_+$. The high probability version proceeds in a similar way.

\end{proof}
A natural question to ask is when does $(\tilde{\ell}, \tilde{P} )$ satisfy the Bernstein condition?

\begin{theorem}\label{Noisy Bernstein}
If $(\tilde{\ell}, \tilde{P} )$ satisfies the Bernstein condition with constant $K$ then $(\ell,P)$ also satisfies the Bernstein condition with the same constant.
\end{theorem}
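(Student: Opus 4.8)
The plan is to exploit the two defining relations of the corruption-corrected loss: that it agrees with the clean loss in expectation ($\ell(P,a) = \tilde{\ell}(\tilde{P},a)$ from theorem \ref{MUBE}) and that $\tilde{\ell}_a = R^*\ell_a$, combined with the pullback identity $T^* R^* = I$ that reconstructibility forces. The whole argument is an upper bound on the clean second moment by the corrupted second moment, obtained by a pointwise application of Jensen's inequality.

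First I would note that since $\ell(P,\cdot) = \tilde{\ell}(\tilde{P},\cdot)$ as functions on $A$, the two losses share a common minimizing action; write $a^* := a_P = a_{\tilde{P}}$. An immediate consequence is that the right-hand sides of the two Bernstein conditions coincide, because $\EE_{\tilde{z}\dist\tilde{P}}\tilde{\ell}(\tilde{z},a) - \tilde{\ell}(\tilde{z},a^*) = \tilde{\ell}(\tilde{P},a) - \tilde{\ell}(\tilde{P},a^*) = \ell(P,a) - \ell(P,a^*)$. So it remains only to control the second-moment terms on the left.

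The key step is the Jensen argument. Set $g := \ell_a - \ell_{a^*} \in \RR^\obs$ and $\tilde{g} := \tilde{\ell}_a - \tilde{\ell}_{a^*} = R^* g \in \RR^{\tilde{\obs}}$. Since $R T = I$, taking adjoints gives $T^* R^* = (R T)^* = I$, hence $g = T^*\tilde{g}$, that is $g(z) = \EE_{\tilde{z}\dist T(z)}\tilde{g}(\tilde{z})$ for every $z \in \obs$. Applying Jensen's inequality to the convex map $x \mapsto x^2$ yields $g(z)^2 \leq \EE_{\tilde{z}\dist T(z)}\tilde{g}(\tilde{z})^2$ pointwise; integrating against $P$ and using $T(P) = \tilde{P}$ then gives $\EE_{z\dist P}g(z)^2 \leq \EE_{\tilde{z}\dist\tilde{P}}\tilde{g}(\tilde{z})^2$.

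Chaining the inequalities closes the proof: $\EE_{z\dist P}(\ell(z,a) - \ell(z,a^*))^2 = \EE_{z\dist P}g^2 \leq \EE_{\tilde{z}\dist\tilde{P}}\tilde{g}^2 \leq K\,(\ell(P,a) - \ell(P,a^*))$, which is precisely the Bernstein condition for $(\ell,P)$ with the same constant $K$. The step to watch is the adjoint identity $g = T^*\tilde{g}$: this is where reconstructibility does the real work, since it is the full condition $R T = I$, and not merely expectation matching, that lets Jensen run in the favorable direction and push the clean second moment below the corrupted one.
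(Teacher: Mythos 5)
Your proof is correct and is essentially the paper's own argument run in the opposite direction: the paper lower-bounds $K\,\EE_{z\dist P}[\ell(z,a)-\ell(z,a_P)]$ down to the clean second moment via the corrupted Bernstein condition, the tower decomposition $\EE_{\tilde{P}}=\EE_P\EE_{T(z)}$, and Jensen for $x\mapsto x^2$, while you upper-bound the clean second moment up to $K$ times the excess risk using the identical three ingredients. Your explicit justification of the pointwise identity $\ell_a = T^*\tilde{\ell}_a$ via $T^*R^* = (RT)^* = I$ is a welcome clarification of a step the paper leaves implicit, but the approach is the same.
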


\begin{proof}
\begin{align*}
K \EE_{z \dist P} \ell(z,a) - \ell(z,a_P) &= K \EE_{\tilde{z} \dist \tilde{P}} \tilde{\ell}(z,a) - \tilde{\ell}(z,a_P) &\\
&\geq \EE_{\tilde{z} \dist \tilde{P}} (\tilde{\ell}(\tilde{z},a) - \tilde{\ell}(\tilde{z},a_P))^2 \\
&= \EE_{z \dist P}\EE_{\tilde{z} \dist T(z)} (\tilde{\ell}(\tilde{z},a) - \tilde{\ell}(\tilde{z},a_P))^2 \\
&\geq\EE_{z \dist P} (\EE_{\tilde{z} \dist T(z)}\tilde{\ell}(\tilde{z},a) - \EE_{\tilde{z} \dist T(z)}\tilde{\ell}(\tilde{z},a_P))^2 \\
&= \EE_{z \dist P} (\ell(z,a) - \ell(z,a_P))^2
\end{align*}
where the first line follows from the definition of $\ell$ and because $a_{P} = a_{\tilde{P}}$, the second as $(\tilde{\ell}, \tilde{P} )$ satisfies the Bernstein condition and finally we have used the convexity of $f(x) = x^2$.

\end{proof}
This theorem (almost) rules out pathological behaviour where ERM learns quickly from corrupted data and yet slowly for clean data. At present it is unknown if the converse to theorem \ref{Noisy Bernstein} is true, with the same or possibly different constant. Here we present a partial converse.

\begin{definition}
Let $T : \obs \rightsquigarrow \tilde{\obs}$ be a Markov kernel and $\ell$ a loss. A pair $(\ell,T)$ are \emph{$\eta$-compatible} if for all $z \in \obs$ and $a_1, a_2 \in A$
$$
\EE_{\tilde{z} \dist T(z)} (\tilde{\ell}(\tilde{z},a_1) - \tilde{\ell}(\tilde{z},a_2))^2 \leq \eta (\ell(z,a_1) - \ell(z,a_2))^2.
$$
\end{definition}

\begin{theorem}
If the pair $(\ell,P)$ satisfies the Bernstein condition with constant $K$ and the pair $(\ell,T)$ are $\eta$-compatible then $(\tilde{l},\tilde{P})$ satisfies the Bernstein condition with constant $\eta K$.
\end{theorem}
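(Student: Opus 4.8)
The plan is to mirror the proof of Theorem \ref{Noisy Bernstein}, but to run the central inequality in the opposite direction, invoking $\eta$-compatibility at exactly the point where that proof used Jensen's inequality. Concretely, the goal is to establish, for every $a \in A$,
$$
\EE_{\tilde{z} \dist \tilde{P}} (\tilde{\ell}(\tilde{z},a) - \tilde{\ell}(\tilde{z},a_{\tilde{P}}))^2 \leq \eta K \left( \EE_{\tilde{z} \dist \tilde{P}} \tilde{\ell}(\tilde{z},a) - \tilde{\ell}(\tilde{z},a_{\tilde{P}}) \right),
$$
which is precisely the Bernstein condition for $(\tilde{\ell},\tilde{P})$ with constant $\eta K$.

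First I would record that $a_P = a_{\tilde{P}}$. This follows from Theorem \ref{MUBE}, which gives $\ell(P,a) = \tilde{\ell}(\tilde{P},a)$ for all $a$; the corrupted and clean expected losses therefore coincide as functions of $a$ and share a common minimizer, which I denote $a_P$. Next I would rewrite the left-hand side as a double expectation using $\tilde{P} = T(P)$,
$$
\EE_{\tilde{z} \dist \tilde{P}} (\tilde{\ell}(\tilde{z},a) - \tilde{\ell}(\tilde{z},a_{P}))^2 = \EE_{z \dist P} \EE_{\tilde{z} \dist T(z)} (\tilde{\ell}(\tilde{z},a) - \tilde{\ell}(\tilde{z},a_{P}))^2 .
$$
I would then apply $\eta$-compatibility pointwise in $z$ with $a_1 = a$ and $a_2 = a_P$, bounding the inner expectation by $\eta (\ell(z,a) - \ell(z,a_P))^2$, and take $\EE_{z \dist P}$ of both sides to obtain the bound $\eta \, \EE_{z \dist P}(\ell(z,a) - \ell(z,a_P))^2$. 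Applying the Bernstein condition for $(\ell,P)$ bounds this in turn by $\eta K \left( \EE_{z \dist P}\ell(z,a) - \ell(z,a_P) \right)$. Finally I would convert the clean expected regret back to the corrupted one by a second use of Theorem \ref{MUBE}, namely $\EE_{z \dist P}\ell(z,a) - \ell(z,a_P) = \EE_{\tilde{z} \dist \tilde{P}} \tilde{\ell}(\tilde{z},a) - \tilde{\ell}(\tilde{z},a_{P})$, which delivers exactly the desired inequality.

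The argument is a short chain of substitutions and inequalities, so the difficulty is conceptual rather than computational. The ``hard part'' is recognizing that $\eta$-compatibility is precisely the hypothesis needed to reverse the step $\EE_{\tilde{z} \dist T(z)}(\tilde{\ell}(\tilde{z},a)-\tilde{\ell}(\tilde{z},a_P))^2 \geq (\EE_{\tilde{z} \dist T(z)} \tilde{\ell}(\tilde{z},a)-\tilde{\ell}(\tilde{z},a_P))^2$ used in Theorem \ref{Noisy Bernstein}, which ran the wrong way for a converse; $\eta$-compatibility supplies an upper bound on the conditional second moment in terms of the squared clean regret, exactly what is required. The only other point demanding care is the identification $a_P = a_{\tilde{P}}$, without which the quadratic terms on the two sides would not be comparable and the constant $\eta K$ could not be isolated; this is handled cleanly by Theorem \ref{MUBE}.
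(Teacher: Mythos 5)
Your proposal is correct and follows exactly the same chain as the paper's proof: expand $\tilde{P}=T(P)$ into a double expectation, apply $\eta$-compatibility pointwise, invoke the Bernstein condition for $(\ell,P)$, and translate back via $\ell(P,a)=\tilde{\ell}(\tilde{P},a)$. Your explicit justification that $a_P=a_{\tilde{P}}$ is a welcome touch the paper leaves largely implicit here (it is stated only in the proof of the preceding theorem).
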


\begin{proof}
\begin{align*}
\EE_{\tilde{z} \dist \tilde{P}} (\tilde{\ell}(\tilde{z},a) - \tilde{\ell}(\tilde{z},a_P))^2 & = \EE_{z \dist P}\EE_{\tilde{z} \dist T(z)} (\tilde{\ell}(\tilde{z},a) - \tilde{\ell}(\tilde{z},a_P))^2 \\
&\leq \eta  \EE_{z \dist P} (\ell(z,a) - \ell(z,a_P))^2 \\
&\leq \eta K \EE_{z \dist P} \ell(z,a) - \ell(z,a_P) \\
&= \eta K \EE_{\tilde{z} \dist \tilde{P}} \tilde{\ell}(\tilde{z},a) - \tilde{\ell}(\tilde{z},a_P) 
\end{align*}
where we have first used $\eta$-compatibility, then the fact that $(\ell,P)$ satisfies the Bernstein condition with constant $K$ and finally the definition of $\tilde{\ell}$.

\end{proof}
While by no means the final line in fast corrupted learning, this theorem does allow one to prove interesting results in the binary classification setting.

\begin{theorem}\label{Bernstein Label Noise}
Let $T$ be label noise, $T = \TnoisyLabels$, then the pair $(\ell_{01}, T)$ is $\eta$-compatible with $\eta = \max(\left(\frac{1+\sigma_{-1} - \sigma_1}{1-\sigma_{-1} - \sigma_1}\right)^2, \left(\frac{1+\sigma_{1} - \sigma_{-1}}{1-\sigma_{-1} - \sigma_1}\right)^2)$.
\end{theorem}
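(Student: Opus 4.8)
The plan is to specialize the general definition of $\eta$-compatibility to binary $01$ loss and to exploit the explicit form of the corruption corrected loss already derived for label noise. First I would recall that for $T = \TnoisyLabels$ the corrupted loss is
$$
\tilde{\ell}(\tilde{y},a) = \frac{(1-\sigma_{-\tilde{y}})\, \ell_{01}(\tilde{y},a) - \sigma_{\tilde{y}}\, \ell_{01}(-\tilde{y},a)}{1- \sigma_{-1} - \sigma_1},
$$
and that, reading off the columns of $T$, the kernel sends a true label $y$ to $\tilde{y}=y$ with probability $1-\sigma_y$ and to $\tilde{y}=-y$ with probability $\sigma_y$. Writing $d_y := \ell_{01}(y,a_1) - \ell_{01}(y,a_2)$ and $\Delta(\tilde{y}) := \tilde{\ell}(\tilde{y},a_1) - \tilde{\ell}(\tilde{y},a_2)$, linearity of the correction in $\ell$ gives $\Delta(\tilde{y}) = \big[(1-\sigma_{-\tilde{y}})\,d_{\tilde{y}} - \sigma_{\tilde{y}}\,d_{-\tilde{y}}\big]/(1-\sigma_{-1}-\sigma_1)$.

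The key structural observation is that for $01$ loss with hard label predictions the pair $(d_{-1},d_1)$ can only be $(0,0)$ (when $a_1$ and $a_2$ agree) or one of $\pm(1,-1)$ (when they disagree), so in every nontrivial case $d_1 = -d_{-1}$ and $d_y^2 = 1$. Substituting $d_1 = 1,\ d_{-1} = -1$ I would compute
$$
\Delta(1)^2 = \left(\frac{1+\sigma_1-\sigma_{-1}}{1-\sigma_{-1}-\sigma_1}\right)^2, \qquad \Delta(-1)^2 = \left(\frac{1+\sigma_{-1}-\sigma_1}{1-\sigma_{-1}-\sigma_1}\right)^2,
$$
and then observe that these are \emph{exactly} the two terms defining the stated $\eta = \max(\Delta(-1)^2,\Delta(1)^2)$; squaring removes both the sign ambiguity of the denominator and the choice of which disagreeing pair $(a_1,a_2)$ is used.

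Finally I would bound the left-hand side of the compatibility inequality. Since $T(y)$ places weights $1-\sigma_y$ on $\tilde{y}=y$ and $\sigma_y$ on $\tilde{y}=-y$, the quantity
$$
\EE_{\tilde{z} \dist T(y)} \big(\tilde{\ell}(\tilde{z},a_1) - \tilde{\ell}(\tilde{z},a_2)\big)^2 = (1-\sigma_y)\,\Delta(y)^2 + \sigma_y\,\Delta(-y)^2
$$
is a convex combination of $\Delta(1)^2$ and $\Delta(-1)^2$ (the weights lie in $[0,1]$ and sum to one for admissible noise rates), hence is at most $\max(\Delta(1)^2,\Delta(-1)^2)=\eta$. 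As $d_y^2 = 1$, the right-hand side $\eta\,(\ell_{01}(y,a_1)-\ell_{01}(y,a_2))^2$ equals $\eta$, so the inequality holds for both $y=1$ and $y=-1$, with the $(0,0)$ case trivial. The main thing to get right is the bookkeeping of the flip probabilities against the $\pm$ indices appearing in $\sigma$ and in the loss arguments, together with the verification that $01$ loss forces $d_1 = -d_{-1}$ — this is precisely what collapses the bound into a single-variable convex combination rather than an expression in two independent differences, making $\eta$ simply the larger of the two squared column-difference ratios.
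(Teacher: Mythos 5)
Your proposal is correct and follows essentially the same route as the paper's proof: both exploit the linearity of the correction to write $\tilde{\ell}_{01}(\tilde{y},a_1)-\tilde{\ell}_{01}(\tilde{y},a_2)$ in closed form, reduce to the single nontrivial disagreement case (the paper via ``symmetry'' with $a_1=1,a_2=-1$, you via the observation that $(d_{-1},d_1)\in\{(0,0),\pm(1,-1)\}$), and bound the expectation under $T(y)$ by the maximum of the two squared column ratios. Your write-up is somewhat more explicit about the final step (the expectation being a convex combination of $\Delta(1)^2$ and $\Delta(-1)^2$), but the argument is the same.
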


\begin{proof}
Due to the symmetry of the left and right hand sides of the Bernstein condition, one only needs to check the case where $a_1 = 1$, $a_2 = -1$. Recall 
\begin{align*}
\tilde{\ell}_{01}(\tilde{y},a) &= \frac{(1-\sigma_{-y}) \ell_{01}(\tilde{y},a) - \sigma_y \ell_{01}(-\tilde{y},a)}{1- \sigma_{-1} - \sigma_1} \\
&= \frac{(1-\sigma_{-y} + \sigma_y)\ell_{01}(\tilde{y},a) - \sigma_y}{1- \sigma_{-1} - \sigma_1}.
\end{align*}
For $y = 1$ it is easy to confirm $\left(\ell_{01}(1,1) - \ell_{01}(1,-1) \right)^2 = 1$. We have
\begin{align*}
\tilde{\ell}_{01}(\tilde{y},1) - \tilde{\ell}_{01}(\tilde{y},-1) &= \frac{(1-\sigma_{-y} + \sigma_y)(\ell_{01}(\tilde{y},1)- \ell_{01}(\tilde{y},-1))}{1- \sigma_{-1} - \sigma_1} \\
&= \frac{-\tilde{y}(1-\sigma_{-y} + \sigma_y)}{1- \sigma_{-1} - \sigma_1}.
\end{align*}
Squaring, taking maximums and finally expectations yields the desired result.

\end{proof}
One very useful example of a pair $(P,\ell)$ satisfying the Bernstein condition with constant $1$ is when $P$ is separable, $\ell$ is $01$ loss and the Bayes optimal classifier is in the function class. Theorem \ref{Bernstein Label Noise} guarantees that in such a setting one can learn at a fast rate from noisy examples.

\newpage

\bibliographystyle{plain}
\bibliography{ref}

\end{document}